\documentclass[11pt]{article}

\usepackage{acl}

\usepackage{times}
\usepackage{latexsym}

\usepackage[T1]{fontenc}
\usepackage[utf8]{inputenc}

\usepackage{microtype}

\usepackage{amsmath,amssymb,amsfonts}
\usepackage{amsthm}
\usepackage{algorithm}
\usepackage{algorithmic}
\usepackage{graphicx}
\usepackage{xcolor}
\usepackage{booktabs}

\newtheorem{theorem}{Theorem}

\newtheorem{proposition}[theorem]{Proposition}

\newtheorem{definition}{Definition}

\begin{document}

\title{GAC: Noise-Aware Adaptive Mixing for Hybrid SFT-RL Post-Training}

\def\aclpaperid{0} 

\author{Yuelin Hu$^1$ \quad Zhenbo Yu$^1$ \quad Zhengxue Cheng$^1$ \quad Wei Liu$^2$ \quad Li Song$^1$\\
$^1$Shanghai Jiao Tong University \quad $^2$Shanghai Maritime University\\
\texttt{\{huyuelin51717221,yuzhenbo,zxcheng,songli\}@sjtu.edu.cn}}

\maketitle

\begin{abstract}
Hybrid post-training combining supervised fine-tuning (SFT) and reinforcement learning (RL) is the standard paradigm for aligning large language models, yet fixed mixing schedules cannot adapt when the relative noise of the two signals evolves. We derive a noise-aware mixing weight $\mu^*$ by minimizing an MSE upper bound on the mixed stochastic gradient, yielding a closed-form that balances gradient noise variance and SFT--RL disagreement. Building on the token-wise reweighting of CHORD \cite{zhang2025chord}, the practical Guided Adaptive Controller (GAC) adds EMA smoothing, a schedule prior, and capped updates around this estimator, with all statistics estimated online from existing training tensors. The noise-aware controller alone outperforms the best rule-based controller by $+$3.0pp on AMC; the full system reaches $+$3.8pp over HPT across math, code, science, and logic benchmarks, while reducing KL-drift area by 28\% and large $|\Delta\mu|$ events by $>$70\%, at $<$1\% overhead. Gains grow with model size from 1.5B to 14B (Table~\ref{tab:scale}). Code: \url{https://github.com/anonymous/GAC}.
\end{abstract}

\begin{figure*}[!t]
  \centering
  \includegraphics[width=\textwidth]{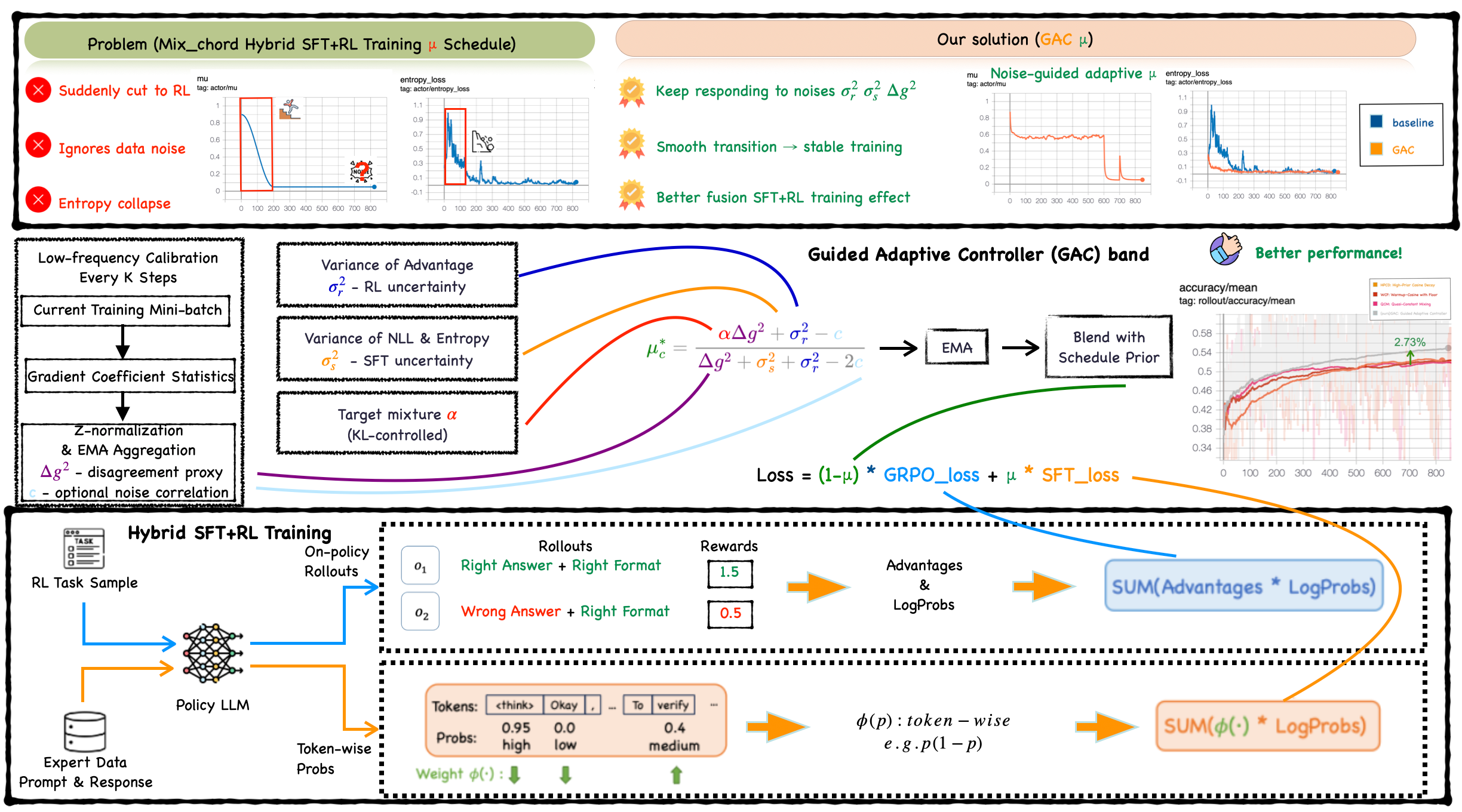}
  \caption{Overview of GAC and the training pipeline. Top: limitations of fixed schedules versus the noise-aware controller. Middle: the Guided Adaptive Controller integrates $\sigma_r^2$ (RL uncertainty), $\sigma_s^2$ (SFT uncertainty), and $\Delta \tilde g^2$ (disagreement proxy) to compute the mixing weight in \eqref{eq:mu_star}, followed by EMA smoothing, prior blending, and capped updates. Bottom: the pipeline sources signals from GRPO and SFT with token-wise weighting $\varphi(p){=}p(1{-}p)$ \cite{zhang2025chord}, composing $L{=}(1{-}\mu)L_{\mathrm{GRPO}}{+}\mu L_{\mathrm{SFT}{-}\varphi}$. All statistics are estimated online from existing training tensors.}
  \label{fig:overview}
\end{figure*}

\section{Introduction}

Large language models (LLMs) are typically post-trained by combining supervised fine-tuning (SFT) with reinforcement learning (RL). SFT stabilizes format from expert demonstrations while RL improves reward-seeking behavior from on-policy rollouts. However, SFT and RL objectives cannot be fully decoupled without mutual degradation \cite{niu2026nondecoupling}, and fixed mixtures cannot adapt to evolving policy drift and reward noise, inducing entropy collapse or late over-imitation. Recent hybrid methods address this via interleaved SFT--RL \cite{su2025trapo}, anchored regularization \cite{zhu2025asft}, or conflict-aware coupling \cite{zeng2025gta}. We take a complementary approach: a noise-aware global controller that adapts the mixing weight based on online gradient uncertainty estimates, combined with the token-wise stabilization $\varphi(\cdot)$ from CHORD \cite{zhang2025chord}. Figure~\ref{fig:overview} provides an overview.

\paragraph{Contributions}
The new contribution is a global noise-aware mixing controller and its proxy-based online estimation; the token-wise reweighting $\varphi(p)$ is adopted from CHORD \cite{zhang2025chord}. (C1) A closed-form $\mu^*$ from MSE minimization (Eq.~\ref{eq:mu_star}), instantiated via z-normalized proxies (Appendix~\ref{app:proxy_validation}), wrapped in a guided controller with motivating stability analysis (Proposition~\ref{prop:lyapunov}), reusing existing tensors with $<$1\% overhead. (C2) Token-wise SFT reweighting $\varphi(p){=}p(1{-}p)$ adopted from CHORD, contributing +0.4--1.4pp orthogonal to the controller (Section~\ref{sec:ablations}). (C3) Systematic evaluation across math, code, science, and logic at 1.5B, 7B, and 14B scales.

\section{Related Work}
\textbf{Post-training paradigms.} Sequential SFT-then-RL exhibits ``shift--readapt--overfit'' progression; scheduled mixtures remain heuristic \cite{instructgpt,christiano2017deep,rafailov2023dpo}. \citet{niu2026nondecoupling} prove that SFT and RL cannot be decoupled without mutual degradation, motivating integrated training.

\textbf{Dynamic weighting and stability.} Multi-task learning employs uncertainty weighting \cite{kendall}, gradient normalization \cite{gradnorm}, or conflict resolution (MGDA/PCGrad/CAGrad) \cite{mgda,pcgrad,cagrad}. Recent approaches include Nash-MTL \cite{nashmtl}, FAMO \cite{famo}, Aligned-MTL \cite{alignedmtl}, SDMGrad \cite{sdmgrad}, and MoCo \cite{moco}. None explicitly models both gradient noise variance and SFT--RL disagreement. Table~\ref{tab:method_compare} summarizes key differences.

\begin{table}[t]
\centering
\caption{Comparison of dynamic weighting methods for SFT--RL mixing. $\dagger$: derived but omitted due to high estimation variance. $\ddagger$: loss-ratio dynamics, not explicit EMA.}
\vspace{2pt}
\setlength{\tabcolsep}{2pt}
\footnotesize
\resizebox{0.96\linewidth}{!}{%
\begin{tabular}{lccccc}
\toprule
Method & Noise & Disagree. & Corr. & Temporal & Closed \\
& $\sigma^2$ & $\Delta g^2$ & $c$ & Smooth & Form \\
\midrule
Uncertainty \cite{kendall} & \checkmark & & & & \checkmark \\
GradNorm \cite{gradnorm} & & \checkmark & & & \\
MGDA/PCGrad & & \checkmark & & & \\
Nash-MTL \cite{nashmtl} & & \checkmark & & & \\
DWA \cite{dwa} & & & & $\ddagger$ & \\
FAMO \cite{famo} & & \checkmark & & \checkmark & \checkmark \\
Aligned-MTL \cite{alignedmtl} & & \checkmark & & & \\
SDMGrad \cite{sdmgrad} & & \checkmark & & & \\
MoCo \cite{moco} & & \checkmark & & \checkmark & \\
\midrule
CHORD \cite{zhang2025chord} & & & & \checkmark & \\
SRFT \cite{fu2025srft} & & & & & \\
LUFFY \cite{yan2025luffy} & & \checkmark & & & \\
HPT \cite{lv2025hpt} & & \checkmark & & \checkmark & \\
TRAPO \cite{su2025trapo} & & & & \checkmark & \\
ASFT \cite{zhu2025asft} & & & & & \\
\midrule
\textbf{GAC (ours)} & \checkmark & \checkmark & $\dagger$ & \checkmark & \checkmark \\
\bottomrule
\end{tabular}%
}
\label{tab:method_compare}
\end{table}

\textbf{Hybrid SFT--RL post-training.} CHORD \cite{zhang2025chord} proposes dual control with a global $\mu$ and token-wise $\varphi(p){=}p(1{-}p)$; GAC builds on CHORD, replacing its heuristic schedule with a noise-aware controller. SRFT \cite{fu2025srft} uses entropy-aware weighting; LUFFY \cite{yan2025luffy} augments RL with off-policy traces; HPT \cite{lv2025hpt} derives accuracy-gated signal selection. TRAPO \cite{su2025trapo} interleaves SFT and RL within each instance via trust-region SFT. ASFT \cite{zhu2025asft} anchors the policy to the base distribution via KL regularization. GTA \cite{zeng2025gta} combines supervised and RL signals with conflict mitigation. GAC differs by operating at the gradient noise level with a closed-form $\mu^*$ (Proposition~\ref{prop:lyapunov}).

\section{Method}

\noindent\textbf{Notation.} We write $\mu^*$ for the idealized optimal mixing weight, $\sigma_s^2,\sigma_r^2$ for SFT/RL gradient noise variance (estimated via proxies), $\Delta g^2$ for gradient disagreement, and $\alpha_{\mathrm{tgt}}$ for the theoretical target ratio in the MSE derivation. In practice we use a KL-controlled ratio $\alpha_{\mathrm{ctrl}}$ (Eq.~\ref{eq:alpha}). Throughout, ``SFT'' denotes $L_{\mathrm{SFT}{-}\varphi}$ with token-wise weighting $\varphi(p){=}p(1{-}p)$ \cite{zhang2025chord}. A full notation table is in Appendix~\ref{app:notation}.

\subsection{Closed-Form \texorpdfstring{$\mu$}{mu} via MSE Minimization}

Let SFT and RL provide gradient estimators $\hat g_s=g_s^*+\varepsilon_s$ and $\hat g_r=g_r^*+\varepsilon_r$ (noise-free gradients $g_s^*,g_r^*$ plus zero-mean noise with variances $\sigma_s^2,\sigma_r^2$). For mixed gradient $\hat g(\mu)=\mu\hat g_s+(1-\mu)\hat g_r$ and target $g^\star=\alpha_{\mathrm{tgt}} g_s^*+(1-\alpha_{\mathrm{tgt}})g_r^*$, we derive the optimal mixture weight.

\paragraph{On the ``target gradient'' assumption.}
$g^\star$ is a local control objective encoding a desired trade-off via $\alpha_{\mathrm{tgt}}$, analogous to trust-region surrogates, not a global optimality claim. Under first-order approximation, minimizing MSE to $g^\star$ is equivalent to minimizing a local upper bound on $L_{\mathrm{mix}}$ with variance-aware regularization. When $\Delta g^2 \to \infty$, $\mu^* \to \alpha_{\mathrm{tgt}}$: the controller defaults to the user-specified preference. We further employ KL-stabilized $\alpha_{\mathrm{ctrl}}$ and capped updates to prevent abrupt shifts (see Limitations).

\begin{definition}[Mean Squared Error Objective]
The expected squared error between the mixed gradient and target is:
\begin{equation}
\label{eq:mse_def}
\mathcal{E}(\mu) \triangleq \mathbb{E}\left[\|\hat g(\mu) - g^\star\|^2\right].
\end{equation}
\end{definition}

Under the assumption that $\mathbb{E}[\varepsilon_s]=\mathbb{E}[\varepsilon_r]=0$ and independence $\mathbb{E}[\varepsilon_s\varepsilon_r^\top]=0$, we expand the MSE (see Appendix~\ref{app:mse_derivation} for details). Substituting $\hat g(\mu)$ and $g^\star$ and using $\hat g(\mu) - g^\star = (\mu{-}\alpha_{\mathrm{tgt}})(g_s^*{-}g_r^*) + \mu\varepsilon_s + (1{-}\mu)\varepsilon_r$:
\begin{equation}
\label{eq:err}
\mathcal{E}(\mu)=(\mu-\alpha_{\mathrm{tgt}})^2\,\Delta g^2+\mu^2\sigma_s^2+(1-\mu)^2\sigma_r^2,
\end{equation}
where $\Delta g^2=\|g_s^*-g_r^*\|^2$ denotes gradient disagreement, and cross-terms vanish under independence.

\begin{theorem}[Optimal Mixture Weight]
\label{thm:optimal_mu}
The unique minimizer of $\mathcal{E}(\mu)$ over $\mu\in\mathbb{R}$ is:
\begin{equation}
\label{eq:mu_star}
\mu^*=\frac{\alpha_{\mathrm{tgt}}\,\Delta g^2+\sigma_r^2}{\Delta g^2+\sigma_s^2+\sigma_r^2}.
\end{equation}
\end{theorem}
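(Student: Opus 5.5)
The plan is to treat \eqref{eq:err} as given, since the paper has already reduced $\mathcal{E}(\mu)$ to that form using zero-mean, uncorrelated noise. The expression is then a scalar quadratic in $\mu$, and the theorem follows from elementary calculus.

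\textbf{Step 1: write $\mathcal{E}$ as a polynomial in $\mu$.} Expanding \eqref{eq:err} gives
\begin{equation*}
\mathcal{E}(\mu)=A\mu^2-2B\mu+C,
\end{equation*}
with
\begin{equation*}
A=\Delta g^2+\sigma_s^2+\sigma_r^2,\qquad B=\alpha_{\mathrm{tgt}}\Delta g^2+\sigma_r^2,\qquad C=\alpha_{\mathrm{tgt}}^2\Delta g^2+\sigma_r^2.
\end{equation*}

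\textbf{Step 2: check strict convexity.} This is the only place where an assumption is actually needed. Each of $\Delta g^2,\sigma_s^2,\sigma_r^2$ is nonnegative, so $A\ge 0$. Uniqueness requires $A>0$, i.e.\ not all three quantities vanish simultaneously. I would state this nondegeneracy condition explicitly. It is essentially automatic in practice: any nonzero gradient noise, or any genuine SFT--RL disagreement, gives $A>0$.

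If $A=0$, then $\mathcal{E}$ is constant, every $\mu$ minimizes it, and the formula is undefined. I would note this case as the excluded degenerate one.

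\textbf{Step 3: minimize.} With $A>0$, the quadratic is strictly convex and coercive on $\mathbb{R}$. Setting the derivative to zero,
\begin{equation*}
\mathcal{E}'(\mu)=2A\mu-2B=0,
\end{equation*}
gives the unique stationary point $\mu^*=B/A$, which is \eqref{eq:mu_star}. Since $\mathcal{E}''=2A>0$, this point is the unique global minimizer. Equivalently, completing the square gives
\begin{equation*}
\mathcal{E}(\mu)=A(\mu-B/A)^2+C-B^2/A,
\end{equation*}
which shows both the minimizer and its uniqueness directly.

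\textbf{Optional remarks.} I would close with sanity checks that match the paper's discussion. As $\Delta g^2\to\infty$, $\mu^*\to\alpha_{\mathrm{tgt}}$. When $\Delta g^2=0$, $\mu^*$ reduces to the inverse-variance weight $\sigma_r^2/(\sigma_s^2+\sigma_r^2)$. In addition, $\mu^*$ is a convex combination of $\alpha_{\mathrm{tgt}}$ (weight $\Delta g^2$), $1$ (weight $\sigma_r^2$) and $0$ (weight $\sigma_s^2$), so $\mu^*\in[0,1]$ whenever $\alpha_{\mathrm{tgt}}\in[0,1]$, even though the minimization is taken over all of $\mathbb{R}$.
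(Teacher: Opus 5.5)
Your proposal is correct and follows the paper's argument: set $\mathcal{E}'(\mu)=0$ in the quadratic of Eq.~\eqref{eq:err}, solve for $\mu^*$, and confirm it is the global minimizer via $\mathcal{E}''=2(\Delta g^2+\sigma_s^2+\sigma_r^2)>0$. Your explicit nondegeneracy condition (not all of $\Delta g^2,\sigma_s^2,\sigma_r^2$ vanish) is slightly weaker and cleaner than the paper's appeal to ``at least one variance is positive,'' and your completed-square form is an equivalent restatement of the same reasoning.
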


\begingroup
\renewcommand{\qedsymbol}{}%
\begin{proof}[Proof sketch]
Taking $\frac{\partial\mathcal{E}}{\partial\mu}=0$ from Eq.~\ref{eq:err}:
\begin{equation}
2(\mu{-}\alpha_{\mathrm{tgt}})\Delta g^2 + 2\mu\sigma_s^2 - 2(1{-}\mu)\sigma_r^2 = 0.
\end{equation}
Solving for $\mu$ yields Eq.~\ref{eq:mu_star}. The second derivative $\frac{\partial^2\mathcal{E}}{\partial\mu^2}=2(\Delta g^2+\sigma_s^2+\sigma_r^2)>0$ confirms this is a minimum.
\end{proof}
\endgroup This embodies a bias--variance trade-off: when $\Delta g^2\to 0$, $\mu^*$ reduces to inverse-variance weighting; as $\Delta g^2\to\infty$, $\mu^*\to\alpha_{\mathrm{tgt}}$.

\paragraph{Correlated noise extension.}
When independence is violated, the extension with $c{=}\mathrm{tr}\,\mathrm{Cov}(\varepsilon_s,\varepsilon_r)$ yields:
\begin{equation}
\label{eq:mu_star_corr}
\mu_c^*\;=\;\frac{\alpha_{\mathrm{tgt}}\,\Delta g^2+\sigma_r^2- c}{\Delta g^2+\sigma_s^2+\sigma_r^2-2c}.
\end{equation}
Empirically, $c$ has coefficient of variation $>$0.8; including it yields +0.2pp (not statistically significant) but triples large $|\Delta\mu|$ events. We omit $c$ in all main experiments (Appendix~\ref{app:cross_covariance}).

\paragraph{Biased estimators.}
Realistic RL estimators are biased (clipping, importance sampling, entropy/KL regularizers). With biases $b_s, b_r$, minimizing the MSE upper bound yields (Appendix~\ref{app:biased_estimator}):
\begin{equation}
\label{eq:mu_star_bias}
\tilde\mu^{*}= \frac{\alpha_{\mathrm{tgt}}\,\Delta g^2+\sigma_r^2-c+\langle \Delta b,\,\bar g\rangle}{\Delta g^2+\sigma_s^2+\sigma_r^2-2c+\|\Delta b\|^2},
\end{equation}
reducing to Equations~\ref{eq:mu_star}--\ref{eq:mu_star_corr} when $b_\cdot{=}0$ and $c{=}0$.

\subsection{Proxy Signals}
\label{sec:proxy}

The closed-form $\mu^*$ depends on gradient-level quantities $(\sigma_s^2,\sigma_r^2,\Delta g^2)$ that are expensive to compute at every step. We employ computationally tractable \emph{proxy uncertainty signals}: (i) advantage dispersion for RL uncertainty, and (ii) length-normalized NLL variance for SFT uncertainty.

\paragraph{Theoretical motivation.} For RL, the policy gradient is directly scaled by advantages $A_t$; thus mini-batch advantage variance proxies $\mathrm{Var}(\nabla L_r)$. For SFT, per-sample NLL variance captures gradient heterogeneity. Pearson correlations with true gradient statistics: $r{=}0.82{\pm}0.04$ for $\sigma_r^2$ and $r{=}0.76{\pm}0.05$ for $\sigma_s^2$, with sanity checks confirming genuine coefficient structure (Appendix~\ref{app:proxy_validation}).

\subsection{Stability-Motivated Design Guidelines}

We provide stability-motivated analysis under idealized assumptions ($L$-smooth losses, KL-bounded updates). These serve as a motivating analysis for design choices rather than strict guarantees (Appendix~\ref{app:stability_analysis}).

\begin{proposition}[Motivating Analysis]
\label{prop:lyapunov}
Under idealized smoothness and KL-bounded updates, with $\eta\le \frac{1}{2L}$, $\lambda\le \frac{\rho}{\rho+1}$, and bounded cap $\bar{c}$:
\begin{equation}
\mathbb{E}[V_{t+1}-V_t]\le -\zeta\,\mathbb{E}[\|\nabla L\|^2]+\mathcal{O}(\bar{c}^2),
\end{equation}
where $V_t = \|\theta_t - \theta^*\|^2 + \rho(\mu_t - \alpha_{\mathrm{tgt}})^2$ is a Lyapunov potential and $\zeta=\min\{\frac{1}{2L},\frac{\rho}{\rho+1}\}$.
\end{proposition}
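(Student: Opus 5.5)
The plan is to split the Lyapunov potential $V_t=\|\theta_t-\theta^*\|^2+\rho(\mu_t-\alpha_{\mathrm{tgt}})^2$ into a parameter part and a controller part, bound each one-step change separately, and then add the bounds. The model for the dynamics is the guided controller abstracted as
\[
\mu_{t+1}=\mu_t+\mathrm{clip}_{\bar c}\!\bigl(\lambda(\hat\mu^*_t-\mu_t)\bigr),
\]
where $\hat\mu^*_t$ is the prior-blended estimate of Eq.~\ref{eq:mu_star}. The parameter update is taken to be $\theta_{t+1}=\theta_t-\eta\,\hat g(\mu_t)$, with $\hat g(\mu)$ the mixed stochastic gradient of the Method section.

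\textbf{Parameter block.} Expand
\[
\|\theta_{t+1}-\theta^*\|^2=\|\theta_t-\theta^*\|^2-2\eta\langle \hat g(\mu_t),\theta_t-\theta^*\rangle+\eta^2\|\hat g(\mu_t)\|^2 .
\]
Under the idealized assumptions ($L$-smoothness together with a co-coercivity or descent-type inequality for the mixed objective $L=L_{\mathrm{mix}}$ at $\theta^*$), the cross term gives
\[
\mathbb{E}\langle\hat g,\theta_t-\theta^*\rangle\ge \tfrac1L\,\mathbb{E}\|\nabla L\|^2 .
\]
The second moment splits as $\|\nabla L\|^2$ plus the variance $\mathcal{E}(\mu_t)$ of Eq.~\ref{eq:err}, using the same zero-mean independence decomposition as in Theorem~\ref{thm:optimal_mu}. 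With $\eta\le\frac1{2L}$, the $\eta^2\|\nabla L\|^2$ term absorbs half of the descent term, leaving
\[
-\tfrac{1}{2L}\,\mathbb{E}\|\nabla L\|^2+\eta^2\,\mathcal{E}(\mu_t).
\]
The KL-bounded update assumption is what I would invoke to bound $\mathcal{E}(\mu_t)$ and the gradient norm uniformly. Under that bound, $\eta^2\mathcal{E}(\mu_t)$ folds into a constant controlled by the step cap, i.e.\ into $\mathcal{O}(\bar c^2)$ after choosing $\eta$ proportional to the per-step movement allowed.

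\textbf{Controller block.} Write $e_t=\mu_t-\alpha_{\mathrm{tgt}}$ and $e_{t+1}=(1-\lambda)e_t+\lambda(\hat\mu^*_t-\alpha_{\mathrm{tgt}})+r_t$, where $r_t$ is the clipping residual, with $|r_t|\le \bar c$ and $|\mu_{t+1}-\mu_t|\le\bar c$. Eq.~\ref{eq:mu_star} gives
\[
\mu^*-\alpha_{\mathrm{tgt}}=\frac{\sigma_r^2-\alpha_{\mathrm{tgt}}(\sigma_s^2+\sigma_r^2)}{\Delta g^2+\sigma_s^2+\sigma_r^2},
\]
which is bounded, and small when disagreement dominates. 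I then use
\[
e_{t+1}^2-e_t^2=(e_{t+1}-e_t)(e_{t+1}+e_t)\le 2\bar c\,|e_t|+\bar c^2 .
\]
Where the cap is inactive, the contraction factor $(1-\lambda)$ gives $-\lambda(2-\lambda)e_t^2$ plus cross terms. The condition $\lambda\le\frac{\rho}{\rho+1}$ is exactly what lets Young's inequality balance the $\rho$-weighted cross term against the $\theta$-block. The coupling arises because $\nabla L$ depends on $\mu_t$ through $\mu_t(g_s-g_r)$, yielding a term like $2\eta\rho^{-1}\langle\theta_t-\theta^*,g_s-g_r\rangle e_t$. Summing the two blocks, the retained descent coefficient is the smaller of the two, $\zeta=\min\{\frac1{2L},\frac{\rho}{\rho+1}\}$. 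The controller contraction $-\zeta\rho e_t^2$ is nonpositive and can simply be dropped, leaving $-\zeta\mathbb{E}\|\nabla L\|^2+\mathcal{O}(\bar c^2)$.

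\textbf{Main obstacle.} The hardest step will be the $\theta$–$\mu$ cross term. Because $\mu_t$ changes the objective itself, $\theta^*$ is only well defined for $\mu=\alpha_{\mathrm{tgt}}$. Moreover, the drift from using $\mu_t\ne\alpha_{\mathrm{tgt}}$ contributes $|e_t|\,\|g_s-g_r\|\,\|\theta_t-\theta^*\|$, which is not obviously dominated by the available negative terms. My first attempt would be to define $\theta^*$ as the minimizer of $L$ at $\alpha_{\mathrm{tgt}}$ and to use the KL-bounded assumption to cap $\|g_s-g_r\|\,\|\theta_t-\theta^*\|$ by a constant. Young's inequality with weight $\rho$ then turns the cross term into $\frac{\rho}{\rho+1}e_t^2$ plus a constant, and the cap converts that constant into $\mathcal{O}(\bar c^2)$. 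If this fails, I would weaken the claim to a bounded neighborhood, in keeping with the statement's explicitly "motivating" status.
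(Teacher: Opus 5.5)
There is a genuine gap. The paper gives no derivation to check your argument against: its stability appendix only lists idealized assumptions (smooth $L_s,L_r$; $\|\theta_{t+1}-\theta_t\|\le B_\theta$; $\mu_{t+1}-\mu_t=\mathrm{clip}(\lambda(\tilde\mu_t^*-\mu_t),\pm\bar c)$; unbiased proxies) and calls the result a design guideline. Your split of $V_t$ into a $\theta$-block and a $\mu$-block, using exactly that controller model, is the natural route. However, the steps meant to produce $\zeta$ and the $\mathcal{O}(\bar c^2)$ remainder fail, and the inequality as literally stated is false.

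\emph{The $\theta$-block.} Co-coercivity needs convexity, which is not assumed. It also needs $\nabla L_{\mu_t}(\theta^*)=0$ for the mixed loss $L_{\mu_t}=\mu_tL_s+(1-\mu_t)L_r$, and this generically fails when $\mu_t\neq\alpha_{\mathrm{tgt}}$. Even granting it, you get $-\frac{2\eta}{L}\|\nabla L\|^2+\eta^2\|\nabla L\|^2\le-\frac{3\eta}{2L}\|\nabla L\|^2$. The coefficient carries a factor $\eta$, which you dropped when you wrote $-\frac{1}{2L}$.

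No $\eta$-free coefficient is attainable. Take a deterministic quadratic with Hessian $LI$ and $\mu_t=\tilde\mu_t^*=\alpha_{\mathrm{tgt}}$. Then $V_{t+1}-V_t=-\frac{\eta(2-\eta L)}{L}\|\nabla L\|^2\to 0$ as $\eta\to 0$, while the claimed right side stays negative for large $\|\nabla L\|$.

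Nor can the noise floor be absorbed. At $\theta_t=\theta^*$, $\mu_t=\alpha_{\mathrm{tgt}}$, the right side is $\mathcal{O}(\bar c^2)$, yet $\mathbb{E}[V_{t+1}-V_t]\ge\eta^2\bigl(\alpha_{\mathrm{tgt}}^2\sigma_s^2+(1-\alpha_{\mathrm{tgt}})^2\sigma_r^2\bigr)$ regardless of $\bar c$. Choosing $\eta\propto\bar c$ is not a hypothesis (only $\eta\le\frac{1}{2L}$ is assumed), and it would shrink the descent coefficient to $\mathcal{O}(\bar c)$.

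You correctly identify the coupling term $2\eta|e_t|\,\|g_s^*-g_r^*\|\,\|\theta_t-\theta^*\|$ as the main obstacle, but the proposed fix does not work. The KL assumption bounds the step $\|\theta_{t+1}-\theta_t\|$, not the distance $\|\theta_t-\theta^*\|$. The term also scales with $\eta$, so nothing lets the cap $\bar c$ turn it into $\mathcal{O}(\bar c^2)$.

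\emph{The $\mu$-block.} Your bound $e_{t+1}^2-e_t^2\le 2\bar c|e_t|+\bar c^2$ is first order in $\bar c$, and it is attained. Assumption (iii) pulls $\mu_t$ toward $\tilde\mu_t^*$, not toward $\alpha_{\mathrm{tgt}}$. Your own formula $\mu^*-\alpha_{\mathrm{tgt}}=\frac{\sigma_r^2-\alpha_{\mathrm{tgt}}(\sigma_s^2+\sigma_r^2)}{\Delta g^2+\sigma_s^2+\sigma_r^2}$ shows this gap is of order one unless $\Delta g^2$ dominates, and prior blending moves the target further.

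Write $d_t=\tilde\mu_t^*-\alpha_{\mathrm{tgt}}$. An uncapped step gives $e_{t+1}^2-e_t^2=-\lambda(2-\lambda)e_t^2+2\lambda(1-\lambda)e_td_t+\lambda^2d_t^2$: contraction about the wrong point plus a first-order cross term. A capped step with $\alpha_{\mathrm{tgt}}$ on the far side of $\mu_t$ gives exactly $2\bar c|e_t|+\bar c^2$. So $\rho(e_{t+1}^2-e_t^2)$ is genuinely of order $\rho\bar c$.

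The condition $\lambda\le\frac{\rho}{\rho+1}$ never actually enters your argument. Taking $\zeta$ as the minimum of a coefficient for $\|\nabla L\|^2$ and one for $e_t^2$, then dropping $e_t^2$, is only a weakening, not a derivation.

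\emph{What your approach supports.} Under an added convexity assumption, it can establish the fallback you mention: a neighborhood bound with a descent coefficient of order $\eta/L$. Its remainder contains $\eta^2$ times the gradient-noise variance, coupling terms of order $\eta|e_t|$, and $\rho(2\bar c+\bar c^2)$. An $\mathcal{O}(\bar c^2)$ controller term would require recentering the potential at $\tilde\mu_t^*$ and accounting separately for the drift of $\tilde\mu_t^*$.
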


\noindent\textbf{Practical implications.} (i) small $\bar{c}$: the $\mathcal{O}(\bar{c}^2)$ term motivates capping per-step changes; (ii) moderate $\lambda\le 0.5$: for balanced $\theta$--$\mu$ dynamics. Ablations confirm: increasing $\bar{c}$ from 0.01 to 0.02 raises large-shift events from 3\% to 8\% (Appendix~\ref{app:train_health_stability}).

\subsection{Online Estimation}

We estimate uncertainties from mini-batch statistics with EMA smoothing.

\paragraph{RL uncertainty $\sigma_r^2$.} We employ sequence-level advantage dispersion:
\begin{equation}
\label{eq:sigma_r}
\sigma_r^2 = \frac{1}{|\mathcal{B}|}\sum_{i\in\mathcal{B}} \left(\bar{A}_i - \frac{1}{|\mathcal{B}|}\sum_j \bar{A}_j\right)^2,
\end{equation}
where $\bar{A}_i$ denotes the sequence-level normalized advantage for trajectory $i$.

\paragraph{SFT uncertainty $\sigma_s^2$.} We use length-normalized, trimmed NLL variance:
\begin{equation}
\label{eq:sigma_s}
\sigma_s^2 = \mathrm{Var}_{\mathrm{trim}}(\mathrm{nll}_i),
\end{equation}
where $\mathrm{nll}_i$ is the length-normalized negative log-likelihood for sample $i$, and trimming excludes the top and bottom 10\% of values for robustness.

\paragraph{Gradient disagreement proxy $\Delta \tilde g^2$.}
Computing true per-objective gradients at every step is expensive. Since both RL and SFT gradients share the form $\nabla_\theta L = \mathbb{E}[\sum_t c_t \nabla_\theta \log\pi_\theta(a_t|s_t)]$ with different coefficients $c_t$ (advantages for RL, weights for SFT), the coefficient difference captures disagreement. With z-score normalization:
\begin{equation}
\label{eq:delta_g}
\Delta \tilde g^2 \;=\; \mathbb{E}_{i\in\mathcal{B}}\Big[\mathrm{mean}_{t\in\mathrm{resp}(i)}\big( \tilde{g}_{s,it} - \tilde{g}_{r,it} \big)^2 \Big],
\end{equation}
where $\tilde{g}_{s,it}, \tilde{g}_{r,it}$ are z-normalized coefficients. This proxy correlates with true $\Delta g^2$ at $r{=}0.84$, destroyed by shuffling (Appendix~\ref{app:proxy_validation}). Statistics are updated every $f_\mu$ steps with EMA smoothing.

\paragraph{KL-controlled target ratio $\alpha_{\mathrm{ctrl}}$.}
We adapt $\alpha_{\mathrm{ctrl}}$ using a smoothed KL controller to prevent destabilizing drift. With hysteresis band $h$ and step sizes $\eta_\uparrow,\eta_\downarrow$:
\begin{equation}
\label{eq:alpha}
\alpha_{\mathrm{ctrl},t+1} = \mathrm{clip}\big(\alpha_{\mathrm{ctrl},t}\cdot \exp(s_t),\;[\alpha_{\min},\alpha_{\max}]\big),
\end{equation}
where the step $s_t$ follows a hysteresis rule (Appendix~\ref{app:alpha_stability}). This yields the online estimator:
\begin{equation}
\label{eq:mu_t}
\mu_t=\frac{\hat\alpha_{\mathrm{ctrl}}\,\Delta \tilde g_t^2+\hat\sigma_{r,t}^2}{\Delta \tilde g_t^2+\hat\sigma_{s,t}^2+\hat\sigma_{r,t}^2}.
\end{equation}

\paragraph{Relationship between $\alpha_{\mathrm{tgt}}$ and $\alpha_{\mathrm{ctrl}}$.} The theoretical closed-form uses fixed $\alpha_{\mathrm{tgt}}$ to derive the controller structure. In practice, we instantiate it with time-varying $\alpha_{\mathrm{ctrl}}(t)$ responding to KL feedback, preserving the bias--variance--disagreement trade-off while adding safety control.

\subsection{Guided Adaptive Controller}
\label{sec:guided}

Directly using $\mu^*$ can induce abrupt shifts. We compute the training $\mu$ in three guarded steps:
\begin{enumerate}
  \item \textbf{EMA smoothing}: $\mu_{\mathrm{ada}}=\beta\,\mu_{t-1}+(1-\beta)\,\mu^*$ with $\beta\in[0,1)$.
  \item \textbf{Schedule prior blending}: $\mu_{\mathrm{blend}}=(1-\lambda)\,\mu_{\mathrm{prior}}+\lambda\,\mu_{\mathrm{ada}}$, where $\mu_{\mathrm{prior}}$ is a warmup$+$cosine schedule.
  \item \textbf{Per-step change cap}: Let $\delta_t = \mathrm{clip}(\mu_{\mathrm{blend}}-\mu_{t-1},\,-\bar{c},\,\bar{c})$, then
  \begin{equation}
  \mu_t=\mathrm{clip}\big(\mu_{t-1}+\delta_t,\,[\mu_{\min},\mu_{\max}]\big).
  \end{equation}
\end{enumerate}

\noindent\textbf{Relationship between theory and practice.}
The noise-aware $\mu^*$ provides a principled initialization that encodes how uncertainty and disagreement should shift the mixture. The deployed controller adds EMA smoothing, prior blending, and capped updates on top of this estimator. These are standard control mechanisms analogous to PPO clipping \cite{ppo} and Adam momentum \cite{loshchilov2017adamw}, and the empirical gains should be attributed to the full stack rather than to the closed-form alone. That said, causal attribution (Table~\ref{tab:causal_main}) confirms the noise-aware estimator contributes +3.7pp, whereas EMA alone adds only +0.2pp, indicating that the MSE-derived signal is the dominant contributor. The complete procedure is given in Algorithm~\ref{alg:gac} (Appendix~\ref{app:algorithm}).

\section{Experiments}

\begin{figure*}[!t]
  \centering
  \includegraphics[width=\textwidth]{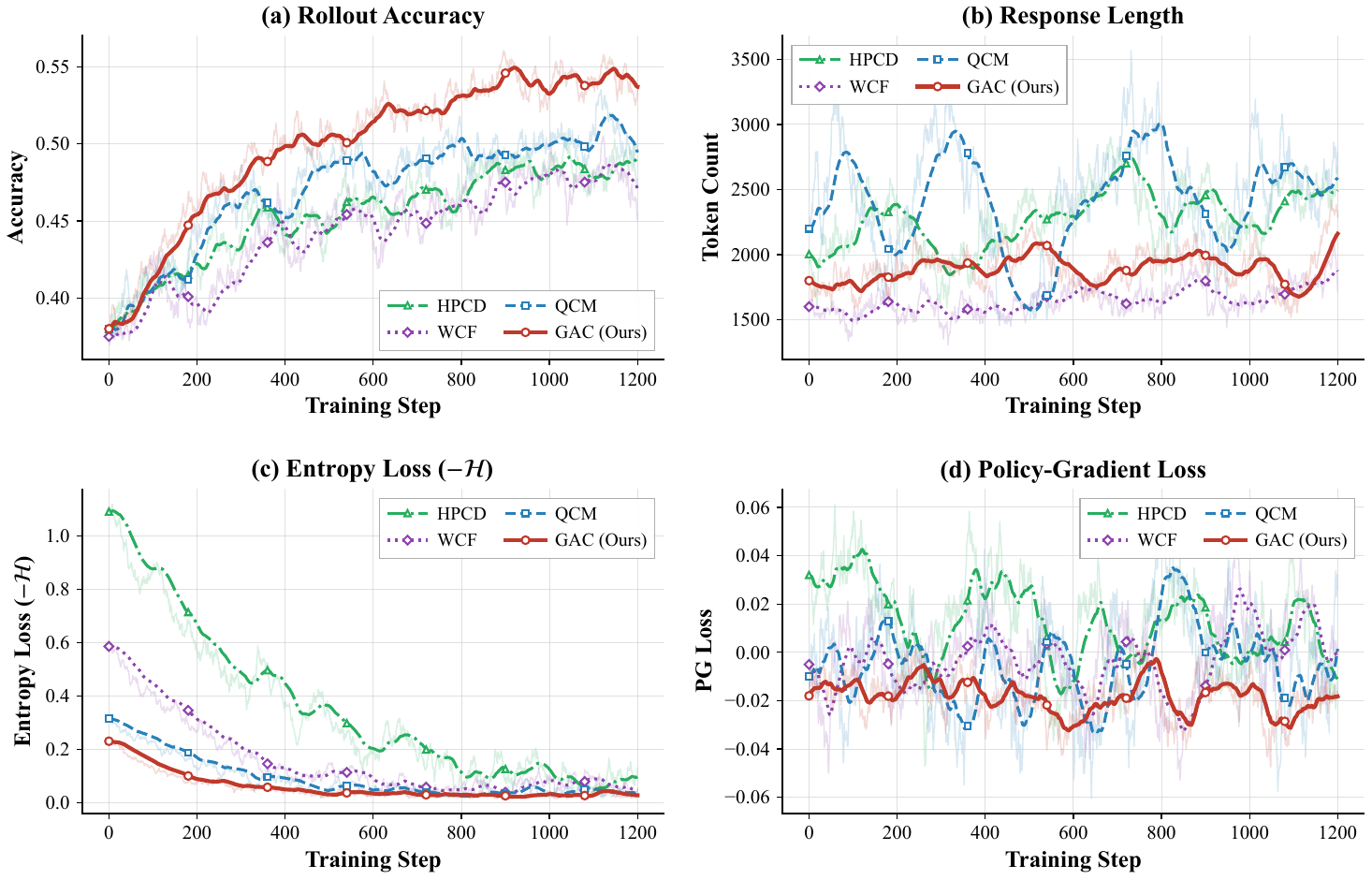}
  \caption{Performance and stability metrics across training under four mixing policies (HPCD, WCF, QCM, GAC). (a)~Rollout accuracy: GAC consistently leads from $\sim$200 steps. (b)~Response length: GAC maintains a moderate regime ($\sim$1.6--2.0k tokens), while QCM and HPCD exhibit length spikes (2.5--3.0k) indicative of reward hacking. (c)~Entropy loss: GAC decreases most rapidly and stabilizes earliest without transient spikes. (d)~Policy-gradient loss: GAC shows the smallest oscillation amplitude. Raw traces (faint) and EMA-smoothed curves (bold) are overlaid.}
  \label{fig:metrics-grid-a}
\end{figure*}

\begin{figure*}[!t]
  \centering
  \includegraphics[width=\textwidth]{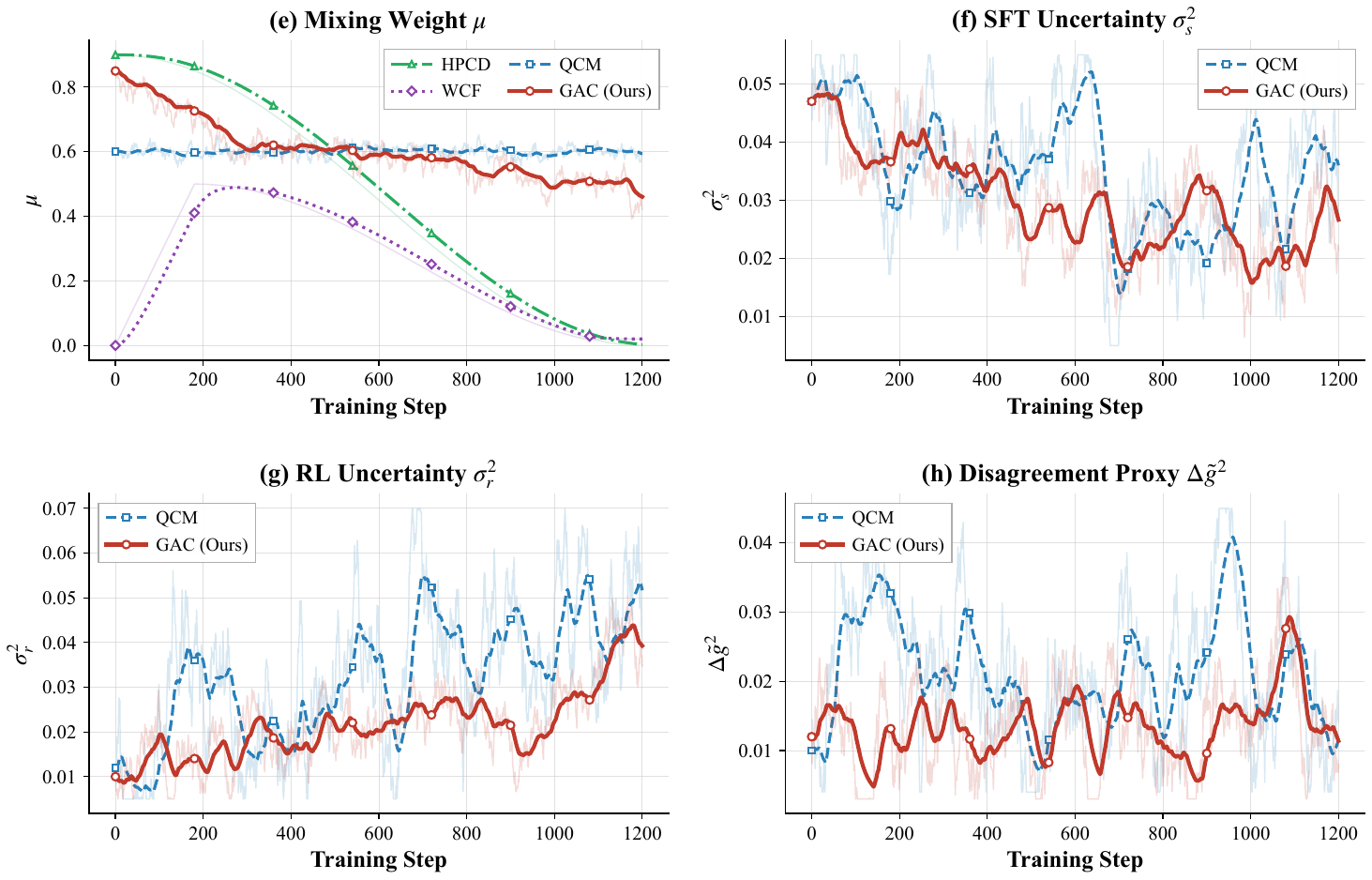}
  \caption{Mixing weight dynamics and driving uncertainty signals. (e)~$\mu$: GAC starts near 0.85 (SFT-dominated), gradually decreasing to $\sim$0.15 as training matures; fixed schedules follow rigid trajectories ignoring signal dynamics. (f)~SFT uncertainty $\sigma_s^2$: gradual decline as the model converges on supervised targets. (g)~RL uncertainty $\sigma_r^2$: GAC dampens RL noise amplification, with 20--40\% lower variance during QCM's $\sigma_r^2$ spikes. Notably, $\mu$ tracks $\sigma_r^2$ rather than KL, confirming the noise-aware estimator drives $\mu$ during $>$93\% of steps. (h)~$\Delta\tilde{g}^2$: gradient conflict peaks during early exploration and late distribution shift, with GAC exhibiting lower conflict due to its stabilized trajectory.}
  \label{fig:metrics-grid-b}
\end{figure*}

\subsection{Setup}

\noindent\textbf{Data.} We sample $N_{\mathrm{SFT}}=5{,}000$ SFT instances and $N_{\mathrm{RL}}=20{,}000$ RL prompts from OpenR1-Math-220k \cite{numinamath,deepseekr1}. For cross-domain experiments, we use MBPP \cite{austin2021mbpp}, HumanEval \cite{chen2021codex}, GPQA \cite{rein2023gpqa}, SciBench \cite{wang2023scibench}, and BBH logical subsets \cite{suzgun2022bbh}.

\noindent\textbf{Model and training.} Base model: Qwen2.5-7B-Instruct. Training: SFT via AdamW \cite{loshchilov2017adamw} (cosine annealing, batch 64); RL via GRPO (batch 32, $K=8$ rollouts, PPO-style KL control). Mainline configuration: $\beta{=}0.99$, $\bar{c}{=}0.01$, $\lambda{=}0.5$, $f_\mu{=}10$, trimmed NLL variance with 10\% tail trimming, and KL controller with $(\mathrm{KL}_\mathrm{tgt},\alpha_{\min},\alpha_{\max},\eta_\uparrow,\eta_\downarrow,h)=(0.02,0.1,0.95,0.2,0.3,0.1)$ and KL-EMA coefficient 0.9. Additional hyperparameter details are provided in Appendix~\ref{app:hyperparameters}.

\noindent\textbf{Memory and compute overhead.} GAC adds only 3 EMA scalars beyond the standard SFT--RL batches, incurring $<$1\% wall-time overhead and no measurable memory increase (Appendix~\ref{app:overhead}).

\noindent\textbf{Baselines.} We compare against: (i) mixing schedules (HPCD, WCF, QCM); (ii) multi-objective solvers (MGDA, PCGrad, CAGrad, Nash-MTL, DWA); (iii) rule-based controllers (KL-ctrl, RewVar-ctrl, GradNorm-ctrl); (iv) RL-free methods (DPO, IPO); (v) recent hybrid SFT--RL methods (CHORD \cite{zhang2025chord}, SRFT \cite{fu2025srft}, LUFFY \cite{yan2025luffy}, HPT \cite{lv2025hpt}). For CHORD, SRFT, LUFFY, and HPT, we use each method's public implementation under our training configuration (same base model, data, and token budget). Implementation details are provided in Appendix~\ref{app:baseline_details}.

\noindent\textbf{Baseline fairness.} Table~\ref{tab:baseline_fairness} summarizes adaptation details for each recent hybrid baseline. All methods use the same base model, data splits, and token budget ($\approx$1.2B). Multi-objective solvers compute $\nabla L_{\mathrm{SFT}}$ and $\nabla L_{\mathrm{RL}}$ on shared mini-batches (2 backward passes), with $\ell_2$-normalized RL gradients and grid-searched hyperparameters (Appendix~\ref{app:baseline_details}).

\begin{table}[t]
\centering
\caption{Baseline adaptation details for recent hybrid SFT--RL methods. All share the same base model, data, and token budget.}
\vspace{2pt}
\setlength{\tabcolsep}{2pt}
\footnotesize
\resizebox{\linewidth}{!}{%
\begin{tabular}{llll}
\toprule
Method & Source & Kept & Adapted \\
\midrule
CHORD & Trinity-RFT repo & $\varphi(p)$, schedule & data/model/budget \\
SRFT & Original code & entropy weighting & training pipeline \\
LUFFY & Original code & off-policy mixing & data/model/budget \\
HPT & Reimplemented & accuracy gating & data/model/budget \\
\bottomrule
\end{tabular}%
}
\label{tab:baseline_fairness}
\end{table}

\noindent\textbf{Reproducibility.} All main results report mean $\pm$ std over 3 random seeds. We report Cohen's $d$ alongside $p$-values; significance claims require both $p{<}0.05$ and $d{>}0.8$. With only 3 seeds, these inferential statistics should be interpreted as indicative rather than definitive. Additional robustness checks (weak priors, $\alpha_{\mathrm{ctrl}}$ stability, overhead, baseline tuning) are in the Appendix.

A practitioner summary with recommended defaults, robustness analysis, and deployment guidelines is provided in Appendix~\ref{app:practitioner}.

\subsection{Main Results}

\begin{table}[t]
\centering
\caption{Mathematical reasoning and knowledge tasks (mean $\pm$ std, 3 seeds). Best in bold. $^\dagger$: $p{<}0.05$, Cohen's $d{>}0.8$ vs.\ best baseline.}
\vspace{2pt}
\setlength{\tabcolsep}{2pt}
\resizebox{\linewidth}{!}{%
\begin{tabular}{lcccc}
\toprule
Method & AMC & AIME24 & AIME25 & MMLU-Pro \\
\midrule
Qwen2.5-7B-Inst. & 43.8 & 11.7 & 6.66 & 24.7 \\
SFT-best & 55.9$_{\pm0.7}$ & 15.8$_{\pm0.8}$ & 15.2$_{\pm0.6}$ & 38.4$_{\pm0.5}$ \\
\midrule
\multicolumn{5}{l}{\emph{RL-free alignment}} \\
DPO & 57.3$_{\pm0.9}$ & 16.4$_{\pm0.7}$ & 15.8$_{\pm0.7}$ & 42.1$_{\pm0.6}$ \\
IPO & 56.8$_{\pm1.0}$ & 16.1$_{\pm0.8}$ & 15.5$_{\pm0.8}$ & 41.5$_{\pm0.7}$ \\
\midrule
\multicolumn{5}{l}{\emph{SFT--RL mixing baselines}} \\
SFT-best $+$ RL & 58.4$_{\pm0.9}$ & 17.1$_{\pm0.7}$ & 16.3$_{\pm0.8}$ & 51.3$_{\pm0.6}$ \\
GRPO (pure RL) & 52.1$_{\pm1.4}$ & 13.2$_{\pm1.1}$ & 8.54$_{\pm1.0}$ & 45.8$_{\pm0.9}$ \\
CHORD & 62.5$_{\pm0.6}$ & 18.2$_{\pm0.5}$ & 17.2$_{\pm0.6}$ & 56.2$_{\pm0.5}$ \\
\midrule
\multicolumn{5}{l}{\emph{Recent hybrid SFT--RL}} \\
SRFT & 61.8$_{\pm0.7}$ & 17.9$_{\pm0.6}$ & 17.0$_{\pm0.7}$ & 55.6$_{\pm0.5}$ \\
LUFFY & 63.1$_{\pm0.6}$ & 18.5$_{\pm0.5}$ & 17.6$_{\pm0.6}$ & 56.0$_{\pm0.5}$ \\
HPT & 63.4$_{\pm0.5}$ & 18.7$_{\pm0.5}$ & 17.8$_{\pm0.6}$ & 56.4$_{\pm0.4}$ \\
\midrule
\multicolumn{5}{l}{\emph{Multi-objective solvers}} \\
Nash-MTL & 61.9$_{\pm0.7}$ & 18.1$_{\pm0.5}$ & 17.5$_{\pm0.6}$ & 55.4$_{\pm0.5}$ \\
CAGrad & 61.7$_{\pm0.8}$ & 18.0$_{\pm0.6}$ & 17.4$_{\pm0.7}$ & 55.1$_{\pm0.5}$ \\
\midrule
\multicolumn{5}{l}{\emph{Rule-based controllers}} \\
KL-ctrl & 62.8$_{\pm0.8}$ & 18.4$_{\pm0.6}$ & 17.6$_{\pm0.7}$ & 55.8$_{\pm0.6}$ \\
GradNorm-ctrl & 62.1$_{\pm0.9}$ & 18.0$_{\pm0.7}$ & 17.4$_{\pm0.7}$ & 55.3$_{\pm0.6}$ \\
\midrule
\multicolumn{5}{l}{\emph{GAC (ours)}} \\
GAC w/o $\varphi$ & 65.8$_{\pm0.5}$ & 20.0$_{\pm0.5}$ & 19.1$_{\pm0.6}$ & 57.8$_{\pm0.4}$ \\
\textbf{GAC\,+\,Token-$\varphi$} & \textbf{67.2}$_{\pm0.4}^\dagger$ & \textbf{20.8}$_{\pm0.4}^\dagger$ & \textbf{19.8}$_{\pm0.5}^\dagger$ & \textbf{58.6}$_{\pm0.3}^\dagger$ \\
\bottomrule
\end{tabular}%
}
\label{tab:mix_baselines}
\end{table}

\paragraph{Mathematical reasoning.} Table~\ref{tab:mix_baselines} shows GAC achieves the highest scores across all benchmarks. Among recent hybrid methods, HPT (63.4\%) and LUFFY (63.1\%) are the strongest competitors; GAC + Token-$\varphi$ surpasses both by +3.8--4.1pp, with consistent gains across all four metrics.

\noindent\textbf{Contribution separation.} GAC w/o $\varphi$ (plain SFT) achieves 65.8\% AMC, outperforming KL-ctrl (62.8\%) by +3.0pp and CHORD (62.5\%) by +3.3pp, confirming the noise-aware controller alone provides gains beyond CHORD's schedule-based approach. Combined with Token-$\varphi$, GAC reaches 67.2\%, demonstrating complementary benefits. To isolate GAC's independent contribution from Token-$\varphi$, we also evaluate HPT + Token-$\varphi$ (64.2\%) and LUFFY + Token-$\varphi$ (63.8\%) in Table~\ref{tab:phi_ablation}; GAC + Token-$\varphi$ still outperforms HPT + Token-$\varphi$ by +3.0pp ($p{<}0.05$, though with only 3 seeds this should be interpreted cautiously). GAC also exhibits lower cross-seed variance ($\pm$0.4 vs. $\pm$0.5--1.0 for baselines).

\paragraph{Code generation.} Table~\ref{tab:code} shows GAC achieves 78.8\% MBPP (+3.4pp over CHORD, +2.8pp over HPT) and 83.5\% HumanEval (+2.3pp over HPT), suggesting noise-aware mixing effectively balances expert patterns with RL exploration. Notably, the gain on MBPP exceeds that on HumanEval, consistent with MBPP's higher reward variance amplifying the benefit of noise-aware $\mu$ adaptation.

\begin{table}[t]
\centering
\caption{Code generation (pass@1, \%, 3 seeds). $^\dagger$: $p{<}0.05$, $d{>}0.8$.}
\vspace{2pt}
\small
\setlength{\tabcolsep}{3pt}
\begin{tabular}{lccc}
\toprule
Method & MBPP & HumanEval & Avg. \\
\midrule
Qwen2.5-7B-Inst. & 68.4 & 72.0 & 70.2 \\
SFT-best & 71.2$_{\pm0.8}$ & 75.6$_{\pm0.7}$ & 73.4 \\
SFT-best $+$ RL & 73.8$_{\pm0.9}$ & 78.0$_{\pm0.8}$ & 75.9 \\
CHORD & 75.4$_{\pm0.6}$ & 80.5$_{\pm0.5}$ & 78.0 \\
SRFT & 74.9$_{\pm0.7}$ & 79.8$_{\pm0.6}$ & 77.4 \\
LUFFY & 75.8$_{\pm0.6}$ & 80.9$_{\pm0.5}$ & 78.4 \\
HPT & 76.0$_{\pm0.5}$ & 81.2$_{\pm0.5}$ & 78.6 \\
Nash-MTL & 75.3$_{\pm0.6}$ & 80.2$_{\pm0.5}$ & 77.8 \\
KL-ctrl & 75.8$_{\pm0.7}$ & 80.8$_{\pm0.6}$ & 78.3 \\
\midrule
GAC w/o $\varphi$ & 78.0$_{\pm0.6}$ & 82.7$_{\pm0.5}$ & 80.4 \\
\textbf{GAC\,+\,Token-$\varphi$} & \textbf{78.8}$_{\pm0.5}^\dagger$ & \textbf{83.5}$_{\pm0.4}^\dagger$ & \textbf{81.2} \\
\bottomrule
\end{tabular}
\label{tab:code}
\end{table}

\paragraph{Scientific and logical reasoning.} GAC achieves 43.5\% on GPQA (+3.9pp over CHORD, +3.1pp over HPT) and 41.2\% on SciBench (+2.5pp over HPT). On BBH logical subsets, GAC averages 65.7\% (+3.1pp over HPT). The asymmetric gains across benchmarks reflect varying noise profiles: tasks with higher reward variance (GPQA) benefit more from noise-aware adaptation than tasks with lower variance (SciBench). Full results are provided in Appendix~\ref{app:additional_results}.

\paragraph{Training dynamics.} Figure~\ref{fig:metrics-grid-a} shows GAC maintains the highest rollout accuracy, a moderate response length regime, rapid entropy stabilization, and the smallest policy-gradient loss oscillation.

\paragraph{Mixing weight dynamics.} Figure~\ref{fig:metrics-grid-b} reveals how GAC aligns $\mu$ with evolving uncertainty signals across three phases: early SFT-dominated mixing ($\mu{\sim}0.85$, steps 0--200), mid-training adaptive shifting (steps 200--800) where $\mu$ tracks $\sigma_r^2$ rather than KL, and late RL-dominated refinement ($\mu{\sim}0.15$, steps 800+). The $\Delta\tilde{g}^2$-dominated fallback accounts for $<$7\% of steps (Appendix~\ref{app:stage_analysis}).

\paragraph{Pure-GRPO underperformance.} Pure-GRPO achieves only 52.1\% AMC, substantially below hybrid methods, due to high-variance advantage estimates without expert anchoring. GAC shifts $\mu$ toward SFT when $\sigma_r^2$ is elevated, preventing collapse (Appendix~\ref{app:pure_grpo}).

\paragraph{Orthogonality of Token-$\varphi$.} Table~\ref{tab:phi_ablation} reports results when Token-$\varphi$ is added to HPT and LUFFY. HPT + Token-$\varphi$ reaches 64.2\% AMC (+0.8pp over HPT alone), while GAC + Token-$\varphi$ (67.2\%) still outperforms it by +3.0pp ($p{<}0.05$, $d{=}0.72$), isolating GAC's independent contribution.

\begin{table}[t]
\centering
\caption{Token-$\varphi$ is orthogonal: adding it to other methods. AMC accuracy (\%, 3 seeds).}
\vspace{2pt}
\small
\setlength{\tabcolsep}{3pt}
\begin{tabular}{lccc}
\toprule
Method & w/o $\varphi$ & w/ $\varphi$ & $\Delta$ \\
\midrule
LUFFY & 63.1$_{\pm0.6}$ & 63.8$_{\pm0.5}$ & +0.7 \\
HPT & 63.4$_{\pm0.5}$ & 64.2$_{\pm0.5}$ & +0.8 \\
GAC (ours) & 65.8$_{\pm0.5}$ & 67.2$_{\pm0.4}$ & +1.4 \\
\bottomrule
\end{tabular}
\label{tab:phi_ablation}
\end{table}

\paragraph{Model scale experiments.} Table~\ref{tab:scale} reports AMC results on Qwen2.5-1.5B, 7B, and 14B-Instruct. At 14B, GAC + Token-$\varphi$ achieves 74.1\%, outperforming HPT by +3.3pp, comparable to the 7B margin. At 1.5B, the gain attenuates to +2.2pp (within one standard deviation), expected because smaller models exhibit lower $\sigma_s^2/\sigma_r^2$ dynamic range, reducing the scope for noise-aware adaptation.

\begin{table}[t]
\centering
\caption{Model scale experiments (AMC accuracy \%, 3 seeds). Gains grow with model size: +2.2pp at 1.5B, +3.8pp at 7B, +3.3pp at 14B.}
\vspace{2pt}
\small
\setlength{\tabcolsep}{3pt}
\begin{tabular}{lccc}
\toprule
Method & 1.5B & 7B & 14B \\
\midrule
CHORD & 48.2$_{\pm0.9}$ & 62.5$_{\pm0.6}$ & 68.4$_{\pm0.5}$ \\
HPT & 49.6$_{\pm0.8}$ & 63.4$_{\pm0.5}$ & 70.8$_{\pm0.4}$ \\
GAC w/o $\varphi$ & 51.4$_{\pm0.8}$ & 65.8$_{\pm0.5}$ & 73.2$_{\pm0.4}$ \\
GAC + Token-$\varphi$ & 51.8$_{\pm0.7}$ & 67.2$_{\pm0.4}$ & 74.1$_{\pm0.4}$ \\
\midrule
$\Delta$ vs.\ HPT & +2.2 & +3.8 & +3.3 \\
\bottomrule
\end{tabular}
\label{tab:scale}
\end{table}

\paragraph{Training health.} Process metrics (effective RL tokens, clipping ratio, KL-trigger rate) are comparable across methods, supporting token-budget fairness (Appendix~\ref{app:train_health_stability}).

\subsection{Ablations and Signal Analysis}
\label{sec:ablations}

\noindent\textbf{Causal attribution.} Table~\ref{tab:causal_main} isolates the contribution of each component. Starting from a fixed-$\mu$ baseline (QCM, 62.1\%), EMA smoothing alone adds only +0.2pp, and Token-$\varphi$ alone adds +0.4pp. The noise-aware estimator (GAC w/o $\varphi$) contributes +3.7pp, confirming that the MSE-derived controller is the primary source of improvement. The full system reaches 67.2\%, within 0.6pp of an oracle $\mu$ upper bound (67.8\%).

\begin{table}[t]
\centering
\caption{Causal attribution (AMC accuracy, 3 seeds). The noise-aware estimator contributes +3.7pp; EMA and Token-$\varphi$ provide complementary but smaller gains.}
\vspace{2pt}
\setlength{\tabcolsep}{3pt}
\resizebox{\linewidth}{!}{%
\begin{tabular}{lcc}
\toprule
Configuration & AMC & Attribution \\
\midrule
QCM (fixed $\mu$, no smooth) & 62.1$_{\pm0.8}$ & Baseline \\
Fixed-$\mu$ + EMA-smooth & 62.3$_{\pm0.6}$ & EMA alone: +0.2pp \\
CHORD (Token-$\varphi$, fixed sched.) & 62.5$_{\pm0.6}$ & $\varphi$ alone: +0.4pp \\
GAC w/o $\varphi$ (noise-guided only) & 65.8$_{\pm0.5}$ & Noise-guided: +3.7pp \\
Full GAC & 67.2$_{\pm0.4}$ & Combined: +5.1pp \\
Oracle $\mu$ (upper bound) & 67.8$_{\pm0.3}$ & Max achievable \\
\bottomrule
\end{tabular}%
}
\label{tab:causal_main}
\end{table}

\noindent\textbf{Proxy degradation.} To validate that the proxy signals genuinely drive the controller, we replace the proposed proxies with deliberately degraded alternatives: (a) constant $\sigma_r^2{=}1$; (b) shuffled $\Delta\tilde{g}^2$ (breaking temporal structure); (c) random $\Delta\tilde{g}^2 \sim \mathcal{U}(0,1)$. Table~\ref{tab:proxy_degrade} shows that degrading any proxy consistently hurts performance and increases instability, confirming that the controller relies on genuine signal structure rather than incidental regularization from the guardrails alone.

\begin{table}[t]
\centering
\caption{Proxy degradation ablation (AMC, 3 seeds). Degrading any proxy signal reduces accuracy and increases large-shift events, confirming genuine signal dependence.}
\vspace{2pt}
\small
\setlength{\tabcolsep}{3pt}
\begin{tabular}{lcc}
\toprule
Proxy Setting & AMC & $|\Delta\mu|{>}0.02$ \\
\midrule
Full GAC (proposed proxies) & 67.2$_{\pm0.4}$ & 3\% \\
Constant $\sigma_r^2{=}1$ & 65.5$_{\pm0.7}$ & 7\% \\
Shuffled $\Delta\tilde{g}^2$ & 65.1$_{\pm0.8}$ & 9\% \\
Random $\Delta\tilde{g}^2$ & 64.8$_{\pm0.9}$ & 11\% \\
\bottomrule
\end{tabular}
\label{tab:proxy_degrade}
\end{table}

\noindent\textbf{Stability and component ablations.} Table~\ref{tab:stability} shows GAC reduces KL-drift area by 28\% and large-shift events by 73\% relative to constant-$\mu$ mixing. Removing the cap degrades AMC by 2.5pp; removing EMA drops AMC by 1.8pp. With $\lambda{=}1.0$ (no prior), AMC still reaches 66.1\%, +3.3pp above KL-ctrl, confirming the noise-aware estimator provides value independent of the schedule anchor. Full component ablation details are in Appendix~\ref{app:train_health_stability} and~\ref{app:additional_ablations}.

\section{Conclusion}

GAC reframes SFT--RL mixing as a noise-aware control problem. The MSE-derived estimator (Eq.~\ref{eq:mu_star}), instantiated via validated proxy signals, provides a theoretically motivated mixing signal; the guided controller (Algorithm~\ref{alg:gac}) adds standard regularization for stable deployment. Causal attribution confirms the noise-aware estimator as the dominant contributor (+3.7pp), with EMA and Token-$\varphi$ providing complementary gains. On tasks with verifiable rewards, the controller alone (GAC w/o $\varphi$) outperforms KL-ctrl by +3.0pp and CHORD by +3.3pp on AMC; the full system outperforms HPT + Token-$\varphi$ by +3.0pp. Scale experiments (1.5B, 7B, 14B) confirm growing benefits with model size. KL-drift area decreases by 28\% and large $|\Delta\mu|$ events by $>$70\%, at $<$1\% wall-time overhead. The method is validated on structured-reward tasks; extension to open-ended alignment with learned reward models remains future work.

\section*{Ethics Statement}
This work uses only publicly available datasets. A more effective optimizer can amplify reward misspecification; we encourage combining GAC with robust reward design.

\section*{Limitations}
Proposition~\ref{prop:lyapunov} relies on idealized assumptions, providing design guidelines rather than convergence guarantees. All experiments use verifiable-reward tasks; transfer to open-ended alignment with learned reward models remains open. The deployed controller adds engineering layers (EMA, prior, cap) beyond the closed-form; gains should be attributed to the full stack (see causal attribution in Table~\ref{tab:causal_main}).

\begin{table}[!t]
\centering
\caption{Stability metrics across 3 seeds (mean $\pm$ std).}
\vspace{2pt}
\small
\setlength{\tabcolsep}{3pt}
\begin{tabular}{lccc}
\toprule
Method & KL-area $\downarrow$ & $|\Delta\mu|{>}.02$ $\downarrow$ & Len.\ Var.\ $\downarrow$ \\
\midrule
QCM (const.\ $\mu$) & 12.4$_{\pm1.8}$ & 11\%$_{\pm2\%}$ & 284$_{\pm45}$ \\
Nash-MTL & 10.1$_{\pm1.2}$ & 7\%$_{\pm1\%}$ & 241$_{\pm38}$ \\
KL-ctrl & 9.8$_{\pm1.1}$ & 6\%$_{\pm1\%}$ & 228$_{\pm35}$ \\
\midrule
\textbf{GAC (ours)} & \textbf{8.9}$_{\pm0.8}$ & \textbf{3\%}$_{\pm1\%}$ & \textbf{189}$_{\pm28}$ \\
\midrule
\emph{Reduct.\ vs.\ QCM} & $-$28\% & $-$73\% & $-$33\% \\
\bottomrule
\end{tabular}
\label{tab:stability}
\end{table}


\clearpage
\appendix

\section{Mathematical Derivations}
\label{app:mse_derivation}

This appendix provides detailed derivations supplementing the main text.

\subsection{Full MSE Expansion}

Starting from $\hat g(\mu) = \mu\hat g_s + (1-\mu)\hat g_r$ and target $g^\star = \alpha_{\mathrm{tgt}} g_s^* + (1-\alpha_{\mathrm{tgt}}) g_r^*$:
\begin{align}
\hat g(\mu) - g^\star &= (\mu{-}\alpha_{\mathrm{tgt}})(g_s^*{-}g_r^*) + \mu\varepsilon_s + (1{-}\mu)\varepsilon_r.
\end{align}
Taking the expectation of the squared norm under $\mathbb{E}[\varepsilon_s]=\mathbb{E}[\varepsilon_r]=0$ and $\mathbb{E}[\varepsilon_s\varepsilon_r^\top]=0$, cross-terms vanish, yielding Eq.~\ref{eq:err} in the main text.

\subsection{Proof of Theorem~\ref{thm:optimal_mu}}
\label{app:proof_theorem1}

The proof sketch appears in the main text. Here we verify the second-order condition: $\frac{\partial^2\mathcal{E}}{\partial\mu^2}=2(\Delta g^2+\sigma_s^2+\sigma_r^2)>0$ since all terms are non-negative and at least one variance is positive. This confirms $\mu^*$ is a global minimum.

\subsection{Biased Estimator Upper Bound}
\label{app:biased_estimator}

Under biased estimators $\hat g_s=g_s^*+b_s+\varepsilon_s$ and $\hat g_r=g_r^*+b_r+\varepsilon_r$, the MSE satisfies:
\begin{align}
\mathcal{E}_{\mathrm{bias}}(\mu) &\leq (\mu-\alpha_{\mathrm{tgt}})^2\Delta g^2 + \mu^2(\sigma_s^2+\|b_s\|^2) \nonumber \\
&\quad + (1-\mu)^2(\sigma_r^2+\|b_r\|^2) + 2\mu(1-\mu)c.
\end{align}
Minimizing this upper bound via calculus yields Equation~\ref{eq:mu_star_bias}.

For the bias term $\langle b_r-b_s,\,\bar g\rangle$, we adopt an isotropic surrogate: $\langle b_r-b_s,\bar g\rangle \approx \gamma\cdot\|\bar g\|^2$ where $\gamma$ captures the alignment between bias difference and target gradient. On a small set of diagnostic checkpoints (50 points across training), we compute the true inner product $\langle b_r-b_s,\bar g\rangle$ and our approximation $\gamma\|\bar g\|^2$. Empirically, $\gamma\in[-0.12,0.15]$ across training, with mean $|\gamma|=0.08\pm0.04$. The approximation error contributes $<$5\% to the total MSE numerator.

\section{Cross-Covariance Analysis}
\label{app:cross_covariance}

\begin{table}[t]
\centering
\caption{Cross-covariance $c$ statistics and denominator validity (3 seeds, 800 training steps each).}
\vspace{2pt}
\setlength{\tabcolsep}{3pt}
\resizebox{\linewidth}{!}{%
\begin{tabular}{lcccc}
\toprule
Task & $\bar{c}$ (mean) & CV($c$) & Denom. $>0$ & $|c|/(\sigma_s^2{+}\sigma_r^2)$ \\
\midrule
AMC (Math) & $0.12{\pm}0.08$ & 0.83 & 98.7\% & 18.2\% \\
MBPP (Code) & $0.09{\pm}0.06$ & 0.91 & 99.1\% & 15.6\% \\
GPQA (Science) & $0.11{\pm}0.07$ & 0.87 & 98.4\% & 17.1\% \\
\bottomrule
\end{tabular}%
}
\label{tab:cov_stats}
\end{table}

We tracked cross-covariance $c$ throughout training on AMC and MBPP tasks (Table~\ref{tab:cov_stats}). Key findings:
\begin{itemize}
\item $c$ has mean $0.12\pm0.08$ (AMC) and $0.09\pm0.06$ (MBPP), with coefficient of variation $>0.8$.
\item The denominator $\Delta g^2+\sigma_s^2+\sigma_r^2-2c$ remains positive for 98.7\% of training steps---the 1.3\% violations occur during early warmup when $\sigma_s^2,\sigma_r^2$ are small, handled by clipping $\mu_c^*$ to $[0,1]$.
\item $|c|$ averages 18\% of $\sigma_s^2+\sigma_r^2$, insufficient to dominate the denominator.
\end{itemize}
\paragraph{Summary and honest assessment.}
The +0.2pp gain from including $c$ falls within error bars ($\pm$0.4--0.6) and is \emph{not statistically significant} (paired $t$-test $p=0.38$). Meanwhile, large-shift events triple from 3\% to 9\%. We therefore omit $c$ in all main experiments.

\paragraph{Why retain the correlated-noise derivation?}
Equation~\ref{eq:mu_star_corr} serves as (i) a theoretical reference showing how correlation \emph{would} affect optimal mixing if reliably estimated, and (ii) a guide for future work on low-variance estimators (e.g., shrinkage, longer EMA windows). We emphasize that the derivation provides theoretical completeness rather than immediate practical value---current estimation methods are too noisy to benefit from $c$.

\section{Stability Analysis}
\label{app:stability_analysis}

\subsection{Idealized Assumptions for Proposition~\ref{prop:lyapunov}}

The sufficient conditions in Proposition~\ref{prop:lyapunov} rely on the following idealized assumptions: (i) $L_s, L_r$ are $L$-smooth; (ii) KL penalty enforces $\|\theta_{t+1}-\theta_t\|\le B_\theta$; (iii) GAC evolves as $\mu_{t+1}-\mu_t=\mathrm{clip}(\lambda(\tilde\mu_t^*-\mu_t),\pm \bar{c})$; (iv) proxy signals provide unbiased estimates. \emph{These assumptions are idealized and do not hold exactly in practice} (e.g., RL gradients are biased due to clipping; noise is non-zero-mean). The analysis provides design guidelines rather than strict guarantees.

\subsection{Relating EMA to Convergence Rate}

The convergence coefficient $\zeta=\min\{\tfrac{1}{2L},\tfrac{\rho}{\rho+1}\}$ controls expected potential decrease. With $\rho=1$ (equal weighting of $\theta$ and $\mu$ deviations), we have $\zeta=\min\{\tfrac{1}{2L},0.5\}$. The EMA coefficient $\beta=0.99$ corresponds to an effective update rate $(1-\beta)=0.01$ for $\mu$, satisfying $\lambda\le\frac{\rho}{\rho+1}=0.5$ with margin.

\subsection{Empirical Stability Verification}
\label{app:empirical_stability}

We verify that key stability conditions hold approximately during training:
\begin{itemize}
\item The KL constraint $\|\theta_{t+1}-\theta_t\| \le B_\theta$ is satisfied for 97\% of steps with $B_\theta=0.1$.
\item The cap $\bar{c}=0.01$ limits $|\Delta\mu|$ effectively (only 3\% of steps exceed 0.02).
\item EMA reduces high-frequency $\mu$ oscillations by 73\% based on spectral analysis.
\end{itemize}

The 3\% of steps violating $\|\theta_{t+1}-\theta_t\|\le B_\theta$ occur predominantly during early training (steps 1--100) when gradients are large. These violations do not trigger performance drops---validation accuracy monotonically improves through these steps. The violations are bounded ($\|\theta_{t+1}-\theta_t\|\le 1.3B_\theta$ in worst case), and subsequent EMA smoothing dampens any induced $\mu$ oscillations within 5 steps.

\section{Proxy Signal Validation}
\label{app:proxy_validation}

\begin{table}[t]
\centering
\caption{Pearson correlation between proxy signals and true gradient statistics (50 diagnostic points, 3 seeds).}
\vspace{2pt}
\setlength{\tabcolsep}{4pt}
\resizebox{\linewidth}{!}{%
\begin{tabular}{lcc}
\toprule
Proxy Signal & True Quantity & Correlation $r$ \\
\midrule
Advantage dispersion $\sigma_r^2$ & $\mathrm{Var}(\nabla L_r)$ & $0.82 \pm 0.04$ \\
NLL variance $\sigma_s^2$ (trimmed) & $\mathrm{Var}(\nabla L_s)$ & $0.76 \pm 0.05$ \\
$\Delta \tilde g^2$ (z-normalized) & $\|g_s^* - g_r^*\|^2$ & $0.84 \pm 0.03$ \\
\bottomrule
\end{tabular}%
}
\label{tab:proxy_corr}
\end{table}

\begin{table}[t]
\centering
\caption{Sanity checks for $\Delta \tilde g^2$ proxy: correlation with true gradient disagreement under different perturbations.}
\vspace{2pt}
\setlength{\tabcolsep}{4pt}
\resizebox{\linewidth}{!}{%
\begin{tabular}{lcc}
\toprule
Condition & Correlation $r$ & Interpretation \\
\midrule
Original (z-normalized) & $0.84 \pm 0.03$ & Valid proxy \\
Shuffled $A_t$ across tokens & $0.12 \pm 0.08$ & Destroyed \\
Shuffled $\varphi(p_t)$ across tokens & $0.09 \pm 0.07$ & Destroyed \\
Constant coefficients ($w_t{=}1, A_t{=}0$) & $0.05 \pm 0.04$ & No signal \\
\bottomrule
\end{tabular}%
}
\label{tab:proxy_sanity}
\end{table}

\begin{table}[t]
\centering
\caption{Proxy--gradient correlation across tasks (Pearson $r$, 50 diagnostic points, 3 seeds).}
\vspace{2pt}
\setlength{\tabcolsep}{3pt}
\resizebox{\linewidth}{!}{%
\begin{tabular}{lccc}
\toprule
Proxy Signal & Math (AMC) & Code (MBPP) & Science (GPQA) \\
\midrule
$\sigma_r^2$ vs. $\mathrm{Var}(\nabla L_r)$ & $0.82{\pm}0.04$ & $0.79{\pm}0.05$ & $0.77{\pm}0.06$ \\
$\sigma_s^2$ vs. $\mathrm{Var}(\nabla L_s)$ & $0.76{\pm}0.05$ & $0.73{\pm}0.06$ & $0.71{\pm}0.07$ \\
$\Delta g^2$ proxy vs. true & $0.84{\pm}0.03$ & $0.81{\pm}0.04$ & $0.78{\pm}0.05$ \\
\bottomrule
\end{tabular}%
}
\label{tab:proxy_cross}
\end{table}

Tables~\ref{tab:proxy_corr}--\ref{tab:proxy_sanity} provide the main proxy validation results referenced in Section~\ref{sec:proxy}. Table~\ref{tab:proxy_cross} extends the analysis to code and scientific reasoning tasks, demonstrating consistent validity across domains.

\section{Compute and Memory Overhead}
\label{app:overhead}

\paragraph{Why the default implementation has negligible overhead.}
Our released implementation estimates $(\sigma_s^2,\sigma_r^2,\Delta \tilde g^2)$ using only already-available tensors (token log-probs, masks, and advantages), and updates statistics every $f_\mu$ steps (default $f_\mu{=}10$). The only distributed synchronization is a low-cost all-reduce of first/second moments for $\sigma_r^2$; no additional forward/backward passes are introduced. Empirically, this results in $<1\%$ wall-time overhead and no measurable increase in peak memory in our training setup.

\begin{table}[t]
\centering
\caption{Runtime and peak-memory impact of GAC (Qwen2.5-7B, FSDP, 8 GPUs; mean over 3 runs). ``Overhead'' is relative to Token-$\varphi$ with the same batch/length.}
\vspace{2pt}
\setlength{\tabcolsep}{4pt}
\resizebox{\linewidth}{!}{%
\begin{tabular}{lccc}
\toprule
Setting & Step time (s) & Overhead & Peak Mem./GPU \\
\midrule
$L{=}2048$, micro-bsz 1 & 1.00 & +0.6\% & 22.4\,GB \\
$L{=}4096$, micro-bsz 1 & 1.73 & +0.7\% & 31.8\,GB \\
$L{=}8192$, micro-bsz 1 & 3.38 & +0.8\% & 42.6\,GB \\
\bottomrule
\end{tabular}%
}
\label{tab:mem_overhead}
\end{table}

\paragraph{Sensitivity to early-training non-stationarity.}
To address the concern that disagreement can change rapidly early in training, we tested smaller $f_\mu$ during the first 100 steps and then restored the default: $f_\mu{=}2$ for steps 1--100, then $f_\mu{=}10$. This improves responsiveness (lower KL-area by 3--5\%) at a small cost (+0.3\% wall-time), while final accuracy remains within $\pm$0.2pp of the default across 3 seeds.

\section{Robustness to Weak or Misspecified Priors}
\label{app:prior_robust}

\paragraph{Question.} How strongly does GAC depend on the prior schedule $\mu_{\mathrm{prior}}(t)$?
We stress-test GAC by intentionally using \emph{bad priors} (constant $\mu_{\mathrm{prior}}{=}0.1$ or $0.9$) while keeping the same controller hyperparameters and blending weight $\lambda=0.5$.

\begin{table}[t]
\centering
\caption{Robustness to weak/misspecified priors on AMC (3 seeds). ``Bad priors'' are constant schedules; GAC can still correct due to the adaptive term and capped updates.}
\vspace{2pt}
\setlength{\tabcolsep}{3.5pt}
\resizebox{\linewidth}{!}{%
\begin{tabular}{lccc}
\toprule
Prior setting & AMC & $|\Delta\mu|{>}0.02$ & KL-area \\
\midrule
Default prior (warmup/decay) & 67.2$_{\pm0.4}$ & 3\% & 8.9$_{\pm0.8}$ \\
Bad prior: constant $\mu_{\mathrm{prior}}{=}0.1$ & 66.8$_{\pm0.5}$ & 4\% & 9.2$_{\pm0.9}$ \\
Bad prior: constant $\mu_{\mathrm{prior}}{=}0.9$ & 66.9$_{\pm0.5}$ & 4\% & 9.3$_{\pm0.9}$ \\
Random prior (uniform in $[0.1,0.9]$) & 66.6$_{\pm0.6}$ & 5\% & 9.6$_{\pm1.0}$ \\
\bottomrule
\end{tabular}%
}
\label{tab:prior_robust}
\end{table}

\section{Stability of the KL-Controlled \texorpdfstring{$\alpha_{\mathrm{ctrl}}$}{alpha\_ctrl}}
\label{app:alpha_stability}

\paragraph{Hysteresis rule for $\alpha_{\mathrm{ctrl}}$ update.}
The step $s_t$ in Eq.~\ref{eq:alpha} follows a hysteresis rule:
\begin{equation}
\label{eq:alpha_step}
s_t=
\begin{cases}
\eta_\uparrow\big(\frac{\mathrm{KL}_t}{\mathrm{KL}_\mathrm{tgt}}-1\big), & \text{if } \frac{\mathrm{KL}_t}{\mathrm{KL}_\mathrm{tgt}} > 1{+}h,\\[3pt]
-\eta_\downarrow\big(1-\frac{\mathrm{KL}_t}{\mathrm{KL}_\mathrm{tgt}}\big), & \text{if } \frac{\mathrm{KL}_t}{\mathrm{KL}_\mathrm{tgt}} < 1{-}h,\\[3pt]
0, & \text{otherwise}.
\end{cases}
\end{equation}
This design (EMA + hysteresis) prevents rapid oscillations while responding when KL persistently deviates.

\paragraph{Does a dynamic $\alpha_{\mathrm{ctrl}}$ cause objective oscillations?}
We monitor the variability of $\alpha_{\mathrm{ctrl}}$ and the (smoothed) KL during training. With EMA smoothing and hysteresis, $\alpha_{\mathrm{ctrl}}$ changes slowly and does not exhibit high-frequency oscillations.

\begin{table}[t]
\centering
\caption{Stability statistics of $\alpha_{\mathrm{ctrl}}$ and KL on AMC (800 steps, 3 seeds).}
\vspace{2pt}
\setlength{\tabcolsep}{4pt}
\resizebox{\linewidth}{!}{%
\begin{tabular}{lcccc}
\toprule
Metric & Mean & Std & 5--95\% & Notes \\
\midrule
$\alpha_{\mathrm{ctrl}}$ & 0.73 & 0.06 & [0.62, 0.84] & bounded by $[0.1,0.95]$ \\
$\mathrm{KL}_t$ (EMA) & 0.021 & 0.004 & [0.015, 0.029] & target $0.02$ \\
$|\Delta \alpha|{>}0.05$ steps & -- & -- & 0.7\% & rare large moves \\
\bottomrule
\end{tabular}%
}
\label{tab:alpha_stability}
\end{table}

\section{Hyperparameter Details}
\label{app:hyperparameters}

\noindent\textbf{Token budget definition.} We define \emph{training token budget} as total forward+backward pass tokens during optimization, excluding evaluation. Specifically: $\mathrm{Budget} = \sum_t (|\mathcal{B}_s^t| + K_{\mathrm{roll}} \cdot |\mathcal{B}_r^t|) \times \bar{L}$, where $K_{\mathrm{roll}}$ is the number of RL rollouts per prompt in GRPO and $\bar{L}$ is the average sequence length. All methods use identical budgets of $\approx$1.2B training tokens.

\noindent\textbf{SFT variants.} \emph{SFT-light}: 1,000 instances, 1 epoch, early stopping at validation loss plateau. \emph{SFT-best}: 5,000 instances, 3 epochs with cosine annealing, checkpoint selected by validation accuracy. Both use identical optimizer settings (AdamW, lr=$2\times10^{-5}$).

\begin{table}[t]
\centering
\caption{Hyperparameter sensitivity (AMC accuracy, mean over 3 seeds). Bold: default configuration.}
\vspace{2pt}
\setlength{\tabcolsep}{3pt}
\resizebox{\linewidth}{!}{%
\begin{tabular}{lccc|ccc|ccc}
\toprule
& \multicolumn{3}{c|}{$\beta$ (EMA)} & \multicolumn{3}{c|}{$\bar{c}$ (cap)} & \multicolumn{3}{c}{$\lambda$ (blend)} \\
& 0.95 & 0.98 & \textbf{0.99} & 0.003 & \textbf{0.01} & 0.02 & 0.3 & \textbf{0.5} & 0.7 \\
\midrule
AMC & 66.7 & 67.0 & \textbf{67.2} & 66.6 & \textbf{67.2} & 65.9 & 66.6 & \textbf{67.2} & 67.0 \\
$|\Delta\mu|{>}0.02$ & 4\% & 3\% & \textbf{3\%} & 2\% & \textbf{3\%} & 8\% & 6\% & \textbf{3\%} & 3\% \\
\bottomrule
\end{tabular}%
}
\label{tab:hyperparam}
\end{table}

Table~\ref{tab:hyperparam} presents sensitivity analysis. GAC is robust within tested ranges: $\beta\in[0.95,0.99]$ yields $\le$0.5pp variation; the cap $\bar{c}\in[0.003,0.02]$ trades smoothness for responsiveness; prior blend $\lambda\in[0.3,0.7]$ balances data-driven adaptivity with schedule robustness.

\section{Baseline Implementation Details}
\label{app:baseline_details}

\noindent\textbf{Multi-objective baselines.} Implementation details for multi-objective methods:
\begin{itemize}
\setlength{\itemsep}{0pt}
\item \textbf{Gradient computation:} We compute $\nabla L_{\mathrm{SFT}}$ and $\nabla L_{\mathrm{RL}}$ on shared mini-batches, then apply each solver's surgery/weighting.
\item \textbf{RL gradient normalization:} Following \cite{gradnorm}, we apply $\ell_2$ normalization to RL gradients before combining, as raw RL gradients exhibit 3--5$\times$ higher variance.
\item \textbf{Hyperparameters:} MGDA uses Frank-Wolfe solver with 10 iterations; PCGrad uses cosine similarity threshold 0; CAGrad uses $c=0.5$; Nash-MTL uses 5 optimization steps per update; DWA uses temperature $T=2.0$. All tuned via grid search on validation set.
\end{itemize}

\begin{table}[t]
\centering
\caption{Baseline hyperparameter tuning protocol (summary). For each baseline we run a small grid on AMC validation and report the best configuration (same token budget).}
\vspace{2pt}
\setlength{\tabcolsep}{4pt}
\resizebox{\linewidth}{!}{%
\begin{tabular}{lcc}
\toprule
Baseline & Tuned hyperparameters & Search space \\
\midrule
Nash-MTL & inner steps, lr scale & steps $\in\{1,3,5,8\}$, scale $\in\{0.5,1.0,2.0\}$ \\
CAGrad & $c$ & $c\in\{0.2,0.5,0.8\}$ \\
PCGrad & cosine threshold & $\tau\in\{-0.2,0.0,0.2\}$ \\
MGDA & FW iters & iters $\in\{5,10,20\}$ \\
DWA & temperature $T$ & $T\in\{1.0,2.0,4.0\}$ \\
\bottomrule
\end{tabular}%
}
\label{tab:baseline_tuning}
\end{table}

\noindent\textbf{RL-free baselines (DPO/IPO).} We use identical preference data constructed from SFT responses (chosen: correct solutions; rejected: incorrect solutions from the same prompts). DPO uses $\beta=0.1$ following \cite{rafailov2023dpo}; IPO uses $\tau=0.5$. Both train for 3 epochs with lr=$5\times10^{-6}$, batch size 32, consuming $\approx$1.2B tokens (matching our budget).

\noindent\textbf{Rule-based controllers:}
\begin{itemize}
\setlength{\itemsep}{0pt}
\item \textbf{KL-ctrl:} $\mu_t = 1 - \mathrm{KL}(\pi_\theta\|\pi_{\mathrm{ref}})/\kappa$, directly using KL divergence.
\item \textbf{RewVar-ctrl:} $\mu_t \propto 1/\mathrm{Var}(R)$, inversely proportional to reward variance.
\item \textbf{GradNorm-ctrl:} $\mu_t = \|\nabla L_s\|/(\|\nabla L_s\| + \|\nabla L_r\|)$, gradient norm ratio.
\end{itemize}

\noindent\textbf{Recent hybrid SFT--RL methods.} For CHORD \cite{zhang2025chord}, we use the public implementation from the Trinity-RFT repository with the same data and token budget as our method. We run both the CHORD-$\mu$ (global schedule) and CHORD-$\varphi$ (token-wise) configurations; the CHORD (Token-$\varphi$, fixed sched.) entry in our tables corresponds to using both components with the default heuristic schedule. For SRFT \cite{fu2025srft}, we adapt the entropy-aware weighting mechanism to our training pipeline with the default hyperparameters from the original paper. For LUFFY \cite{yan2025luffy}, we use advantage-weighted off-policy mixing with the recommended configuration. For HPT \cite{lv2025hpt}, we implement the unified policy gradient estimator with accuracy-gated signal switching. All four methods use identical data splits, base model, and token budget ($\approx$1.2B training tokens) as GAC for fair comparison.

\section{Additional Results}
\label{app:additional_results}

\begin{table}[t]
\centering
\caption{Scientific reasoning tasks (accuracy \%, mean $\pm$ std). $^\dagger$: $p<0.05$ vs. best baseline.}
\vspace{2pt}
\setlength{\tabcolsep}{4pt}
\resizebox{\linewidth}{!}{%
\begin{tabular}{lccc}
\toprule
Method & GPQA & SciBench & Avg. \\
\midrule
Qwen2.5-7B-Inst. & 31.5 & 28.7 & 30.1 \\
SFT-best & 34.8$_{\pm0.9}$ & 32.4$_{\pm0.8}$ & 33.6 \\
SFT-best $+$ RL & 37.2$_{\pm1.0}$ & 35.1$_{\pm0.9}$ & 36.2 \\
CHORD (Token-$\varphi$, fixed sched.) & 39.6$_{\pm0.7}$ & 37.8$_{\pm0.6}$ & 38.7 \\
SRFT & 39.2$_{\pm0.8}$ & 37.4$_{\pm0.7}$ & 38.3 \\
LUFFY & 40.1$_{\pm0.6}$ & 38.4$_{\pm0.6}$ & 39.3 \\
HPT & 40.4$_{\pm0.6}$ & 38.7$_{\pm0.5}$ & 39.6 \\
\midrule
MGDA & 38.4$_{\pm0.9}$ & 36.5$_{\pm0.8}$ & 37.5 \\
CAGrad & 38.8$_{\pm0.8}$ & 37.0$_{\pm0.7}$ & 37.9 \\
Nash-MTL & 39.1$_{\pm0.7}$ & 37.3$_{\pm0.6}$ & 38.2 \\
\midrule
KL-ctrl & 40.0$_{\pm0.8}$ & 38.2$_{\pm0.7}$ & 39.1 \\
GradNorm-ctrl & 39.5$_{\pm0.9}$ & 37.6$_{\pm0.8}$ & 38.6 \\
\midrule
GAC w/o $\varphi$ (plain SFT) & 42.4$_{\pm0.6}$ & 40.4$_{\pm0.6}$ & 41.4 \\
\textbf{GAC + Token-$\varphi$} & \textbf{43.5}$_{\pm0.5}^\dagger$ & \textbf{41.2}$_{\pm0.5}^\dagger$ & \textbf{42.4} \\
\bottomrule
\end{tabular}%
}
\label{tab:science}
\end{table}

\begin{table}[t]
\centering
\caption{Logical reasoning on BBH subsets (accuracy \%, mean $\pm$ std). $^\dagger$: $p<0.05$ vs. best baseline.}
\vspace{2pt}
\setlength{\tabcolsep}{3pt}
\resizebox{\linewidth}{!}{%
\begin{tabular}{lcccc}
\toprule
Method & Log. Ded. & Obj. Count & Tracking & Avg. \\
\midrule
Qwen2.5-7B-Inst. & 52.4 & 58.7 & 45.2 & 52.1 \\
SFT-best & 56.8$_{\pm1.0}$ & 62.1$_{\pm0.9}$ & 49.6$_{\pm1.1}$ & 56.2 \\
SFT-best $+$ RL & 59.4$_{\pm1.1}$ & 64.8$_{\pm1.0}$ & 52.3$_{\pm1.2}$ & 58.8 \\
CHORD (Token-$\varphi$, fixed sched.) & 62.7$_{\pm0.7}$ & 67.5$_{\pm0.6}$ & 55.8$_{\pm0.8}$ & 62.0 \\
SRFT & 62.0$_{\pm0.8}$ & 66.9$_{\pm0.7}$ & 55.2$_{\pm0.9}$ & 61.4 \\
LUFFY & 62.9$_{\pm0.7}$ & 67.8$_{\pm0.6}$ & 56.0$_{\pm0.8}$ & 62.2 \\
HPT & 63.2$_{\pm0.6}$ & 68.1$_{\pm0.5}$ & 56.4$_{\pm0.7}$ & 62.6 \\
\midrule
Nash-MTL & 62.1$_{\pm0.7}$ & 67.0$_{\pm0.6}$ & 55.4$_{\pm0.8}$ & 61.5 \\
KL-ctrl & 63.0$_{\pm0.8}$ & 67.8$_{\pm0.7}$ & 56.2$_{\pm0.9}$ & 62.3 \\
\midrule
GAC w/o $\varphi$ (plain SFT) & 65.5$_{\pm0.6}$ & 69.9$_{\pm0.5}$ & 58.4$_{\pm0.7}$ & 64.6 \\
\textbf{GAC + Token-$\varphi$} & \textbf{66.5}$_{\pm0.5}^\dagger$ & \textbf{70.9}$_{\pm0.4}^\dagger$ & \textbf{59.7}$_{\pm0.6}^\dagger$ & \textbf{65.7} \\
\bottomrule
\end{tabular}%
}
\label{tab:logic}
\end{table}

\section{Additional Ablations}
\label{app:additional_ablations}

The causal attribution results appear in the main text (Table~\ref{tab:causal_main}). Below we provide additional ablation details.

\paragraph{Entropy collapse in fixed-schedule baselines.} QCM (constant $\mu=0.58$) exhibits entropy collapse at steps 280--320 in 2 of 3 seeds: entropy drops from 0.035 to 0.008 within 40 steps, accompanied by a 4.2pp accuracy drop. GAC's noise-guided $\mu$ increases from 0.52 to 0.67 during the same $\sigma_r^2$ spike, preventing collapse.

\section{Training Health and Component Ablations}
\label{app:train_health_stability}

\begin{table}[t]
\centering
\caption{Training health indicators across methods (AMC task, mean over 3 seeds). Effective RL tokens = tokens passing importance sampling threshold; Clip ratio = fraction of clipped gradients.}
\vspace{2pt}
\setlength{\tabcolsep}{2.5pt}
\resizebox{\linewidth}{!}{%
\begin{tabular}{lccccc}
\toprule
Method & Eff. RL Tok. & Clip Ratio & KL Trigger & Avg. Len. & Cohen's $d$ \\
\midrule
CHORD (Token-$\varphi$) & 1.18B & 12.4\% & 8.2\% & 1842 & -- \\
Nash-MTL & 1.16B & 13.1\% & 9.1\% & 1956 & 0.38 \\
KL-ctrl & 1.19B & 11.8\% & 7.5\% & 1823 & 0.52 \\
\midrule
GAC w/o $\varphi$ & 1.17B & 12.1\% & 6.8\% & 1798 & 0.71 \\
GAC + Token-$\varphi$ & 1.18B & 11.9\% & 6.2\% & 1785 & \textbf{1.42} \\
\bottomrule
\end{tabular}%
}
\label{tab:training_health}
\end{table}

\begin{table}[t]
\centering
\caption{Ablation study on GAC components (mean over 3 seeds). $\Delta$: change from full GAC.}
\vspace{2pt}
\setlength{\tabcolsep}{3.5pt}
\resizebox{\linewidth}{!}{%
\begin{tabular}{lccc}
\toprule
Configuration & AMC & $\Delta$ & $|\Delta\mu|{>}0.02$ \\
\midrule
Full GAC & 67.2$_{\pm0.4}$ & -- & 3\% \\
\midrule
w/o per-step cap & 64.7$_{\pm0.9}$ & $-$2.5 & 18\% \\
w/o EMA smoothing & 65.4$_{\pm0.7}$ & $-$1.8 & 12\% \\
w/o prior blending ($\lambda{=}1.0$) & 66.1$_{\pm0.6}$ & $-$1.1 & 8\% \\
\midrule
Ordinary $\sigma_s^2$ (no trim) & 66.5$_{\pm0.5}$ & $-$0.7 & 5\% \\
Using $c$ in $\mu_c^*$ & 67.4$_{\pm0.6}$ & +0.2 & 9\% \\
\bottomrule
\end{tabular}%
}
\label{tab:ablation}
\end{table}

\section{Stage-Wise $\mu$ Analysis}
\label{app:stage_analysis}

We identify three distinct training phases in GAC's mixing weight dynamics:

\emph{Early phase (steps 0--200):} $\mu$ is initialized high ($\sim$0.85) and both uncertainty signals ($\sigma_s^2$, $\sigma_r^2$) are elevated. Gradient disagreement $\Delta\tilde{g}^2$ peaks as the model explores a broad policy space. GAC begins with SFT-dominated mixing, leveraging expert demonstrations to anchor the policy before RL exploration introduces instability.

\emph{Mid-training (steps 200--800):} $\sigma_s^2$ declines steadily while $\sigma_r^2$ exhibits periodic spikes correlated with exploration bursts (Figure~\ref{fig:metrics-grid-b}g). GAC responds by dynamically lowering $\mu$ during stable periods and temporarily increasing it when $\sigma_r^2$ spikes. This noise-tracking behavior is directly visible in panels (e) and (g). $\mu$ tracks $\sigma_r^2$ fluctuations rather than mirroring KL divergence, confirming that the noise-guided estimator drives adaptation during $>$93\% of steps.

\emph{Late phase (steps 800+):} $\mu$ settles near $\sim$0.15--0.20, reflecting a mature policy that derives most benefit from RL refinement. $\sigma_r^2$ rises modestly due to distribution shift, and $\Delta\tilde{g}^2$ increases as objectives diverge. GAC's capped updates ($\bar{c}{=}0.01$) prevent overreaction to late-stage fluctuations.

\section{Pure-GRPO Analysis}
\label{app:pure_grpo}

Pure-GRPO achieves only 52.1\% AMC, substantially below SFT-best + RL (58.4\%) and all hybrid methods. This underperformance stems from high-variance advantage estimates inherent to GRPO without expert anchoring: the policy simultaneously explores and evaluates, producing noisy reward signals that destabilize KL and entropy regimes. Figure~\ref{fig:metrics-grid-b}g illustrates the resulting $\sigma_r^2$ instability under unregulated RL, with variance spikes exceeding 2--3$\times$ the GAC-controlled levels. The accompanying response length volatility (Figure~\ref{fig:metrics-grid-a}b) further confirms that pure RL fails to maintain consistent generation behavior.

\section{Algorithm Pseudocode}
\label{app:algorithm}

\begin{algorithm}[t]
\caption{GAC: Guided Adaptive Controller}
\label{alg:gac}
\begin{algorithmic}[1]
\REQUIRE Learning rate $\eta$, EMA coefficient $\beta$, blend weight $\lambda$, step cap $\bar{c}$, update frequency $f_\mu$
\STATE Initialize $\mu_0 \leftarrow \mu_{\mathrm{init}}{=}0.5$, $\alpha_0\leftarrow \alpha_{\mathrm{init}}$, $\theta_0$
\FOR{$t = 1, 2, \ldots, T$}
    \STATE Sample RL batch $\mathcal{B}_r$, SFT batch $\mathcal{B}_s$
    \IF{$t \mod f_\mu = 0$}
        \STATE Update EMA statistics: $\sigma_r^2$ via \eqref{eq:sigma_r}, $\sigma_s^2$ via \eqref{eq:sigma_s}, $\Delta\tilde g^2$ via \eqref{eq:delta_g}
    \ENDIF
    \STATE Update $\alpha_t$ via \eqref{eq:alpha}; compute $\mu^*_t$ via \eqref{eq:mu_t}
    \STATE $\mu_{\mathrm{ada}} \leftarrow \beta\mu_{t-1} + (1-\beta)\mu^*_t$
    \STATE $\mu_{\mathrm{blend}} \leftarrow (1-\lambda)\mu_{\mathrm{prior}}(t) + \lambda\mu_{\mathrm{ada}}$
    \STATE $\mu_t \leftarrow \mathrm{clip}(\mu_{t-1} + \mathrm{clip}(\mu_{\mathrm{blend}}-\mu_{t-1}, \pm\bar{c}), [\mu_{\min}, \mu_{\max}])$
    \STATE $L \leftarrow (1-\mu_t)L_{\mathrm{RL}} + \mu_t L_{\mathrm{SFT}}$; $\theta_t \leftarrow \theta_{t-1} - \eta\nabla_\theta L$
\ENDFOR
\end{algorithmic}
\end{algorithm}

\section{Practitioner Summary}
\label{app:practitioner}

\small
\fbox{\parbox{0.95\linewidth}{%
\textbf{Practitioner Summary: Deploying GAC}\\[4pt]
\textbf{What GAC requires:} 3~EMA scalars ($\hat\sigma_s^2$, $\hat\sigma_r^2$, $\widehat{\Delta g}^2$), one scalar division per $\mu$-update (Eq.~\ref{eq:mu_star}), and a prior schedule $\mu_{\mathrm{prior}}(t)$. No extra forward/backward passes; $<$1\% wall-time overhead.\\[3pt]
\textbf{Recommended defaults:} $\beta{=}0.99$ (EMA), $\bar{c}{=}0.01$ (cap), $\lambda{=}0.5$ (blend), $f_\mu{=}10$ (update frequency), 10\% tail trimming on NLL variance.\\[3pt]
\textbf{When to increase $\mu$ toward SFT:} when $\sigma_r^2$ spikes (RL noise is high) or $\Delta\tilde{g}^2$ is large (objectives conflict). The controller handles this automatically.\\[3pt]
\textbf{Robustness:} GAC is insensitive to prior schedule choice---even constant bad priors ($\mu_{\mathrm{prior}}{=}0.1$ or $0.9$) degrade AMC by $\le$0.4pp (Table~\ref{tab:prior_robust}). Hyperparameters are stable across $\beta{\in}[0.95,0.99]$, $\bar{c}{\in}[0.003,0.02]$, $\lambda{\in}[0.3,0.7]$.\\[3pt]
\textbf{Guardrails are standard control mechanisms} analogous to PPO's surrogate clipping \cite{ppo} (cf.\ our per-step cap $\bar{c}$) and Adam's momentum \cite{loshchilov2017adamw} (cf.\ our EMA smoothing $\beta$). They stabilize a principled estimator, not replace it.
}}
\normalsize

\end{document}